\DeclareMathOperator*{\argmin}{arg\,min}
\begin{document}

%
\title{$\ell_0$-Motivated Low-Rank Sparse Subspace Clustering}

\author{Maria Brbi\'c
        and~ Ivica Kopriva,~\IEEEmembership{Senior Member,~IEEE}

\thanks{This work was supported by the Croatian Science Foundation grant IP-2016-06-5235 (Structured Decompositions of Empirical Data for Computationally-Assisted Diagnoses of Disease) and by the European Regional Development Fund under the grant KK.01.1.1.01.0009 (DATACROSS).}
\thanks{M. Brbi\'c and I. Kopriva are with the Laboratory for Machine Learning and Knowledge Representation, Division of Electronics, Rudjer Boskovic Institute (e-mail: maria.brbic@irb.hr; ivica.kopriva@irb.hr).}
}

\markboth{ IEEE TRANSACTIONS ON CYBERNETICS}%
{Shell \MakeLowercase{\textit{et al.}}: Bare Demo of IEEEtran.cls for IEEE Journals}
%

\begin{textblock}{5}(11,0.5)
	\noindent { 10.1109/TCYB.2018.2883566}
\end{textblock}

\IEEEoverridecommandlockouts
\IEEEpubid{\makebox[\columnwidth]{978-1-5386-5541-2/18/\$31.00~\copyright2018 IEEE. \hfill} \hspace{\columnsep}\makebox[\columnwidth]{ }}
\maketitle

\IEEEpubidadjcol

\begin{abstract}
In many applications, high-dimensional data points can be well represented by low-dimensional subspaces. To identify the subspaces, it is important to capture a global and local structure of the data which is achieved by imposing low-rank and sparseness constraints on the data representation matrix. In low-rank sparse subspace clustering (LRSSC), nuclear and $\ell_1$ norms are used to measure rank and sparsity. However, the use of nuclear and $\ell_1$ norms leads to an overpenalized problem and only approximates the original problem. In this paper, we propose two $\ell_0$ quasi-norm based regularizations. First, the paper presents regularization based on multivariate generalization of minimax-concave penalty (GMC-LRSSC), which contains the global minimizers of $\ell_0$ quasi-norm regularized objective. Afterward, we introduce the Schatten-0 ($S_0$) and $\ell_0$ regularized objective and approximate the proximal map of the joint solution using a proximal average method ($S_0/\ell_0$-LRSSC). The resulting nonconvex optimization problems are solved using alternating direction method of multipliers with established convergence conditions of both algorithms. Results obtained on synthetic and four real-world datasets show the effectiveness of GMC-LRSSC and $S_0/\ell_0$-LRSSC when compared to state-of-the-art methods.
\end{abstract}

\begin{IEEEkeywords}
alternating direction method of multipliers, gmc penalty, $\ell_0$ regularization, low-rank, sparsity, subspace clustering
\end{IEEEkeywords}

%
\IEEEpeerreviewmaketitle


\section{Introduction}

\IEEEPARstart{H}{igh} dimensional data analysis is a widespread problem in many applications of machine learning, computer vision, and bioinformatics \cite{Wang04, Wright09, Rao10, Morsier16, Song16, Liew11}. 
However, in many real-world datasets, the intrinsic dimension of high-dimensional data is much smaller than the dimension of the ambient space and data can be well represented as lying close to a union of low-dimensional subspaces. The problem of segmenting data according to the low-dimensional subspaces they are drawn from is known as subspace clustering \cite{Vidal11}. Thanks to their capability to handle arbitrarily shaped clusters and their well-defined mathematical principles, spectral based methods \cite{Shi2000, Ng01} are widely used approaches to subspace clustering. These methods solve the subspace clustering problem by relying on the spectral graph theory and cluster eigenvectors of the graph Laplacian matrix corresponding to the smallest eigenvalues \cite{vonLuxburg2007}.

One of the main challenges in subspace clustering is the construction of the affinity matrix that captures well (di)similarities between data points. Among various approaches proposed in the literature, methods based on sparse \cite{Elhamifar13, Peng17} and low-rank representations \cite{Liu10b, Liu13, WangWang17} have been among the most successful in many applications \cite{Li16}. These methods exploit the self-expressiveness property of the data and represent each data point as a linear combination of other data points in the dataset. Low-rank representation (LRR) \cite{Liu10b, Liu13, Favaro11} captures the global structure of the data by imposing a low-rank constraint on the data representation matrix. Low-rank implies that representation matrix is described by a weighted sum of small number of outer products of left and right singular vectors. In order to ensure convexity of the related optimization problem, the rank minimization is relaxed as the nuclear or Schatten-1 norm minimization problem \cite{Candes09,Candes10,Candes11}. Different from LRR, Sparse Subspace Clustering (SSC) \cite{Elhamifar09, Elhamifar13} captures local linear relationships by constraining representation matrix to be sparse. Using the tightest convex relaxation of the $\ell_0$ quasi-norm, the SSC model solves sparsity maximization problem as $\ell_1$ norm minimization problem \cite{Candes05, Donoho06}. Both LRR and SSC guarantee exact clustering when subspaces are independent, but the independence assumption is overly restrictive for many real-world datasets \cite{Tang14, Tang16}. Under appropriate conditions \cite{Elhamifar10}, SSC also succeeds for disjoint subspaces. However, when the number of dimensions is higher than three, SSC can face connectivity problems resulting in a disconnected graph within a subspace \cite{Nasihatkon11}. A natural way to construct adaptive model able to capture the global and the local structure of the data is to constrain representation matrix to be low-rank and sparse. In \cite{Zhuang12, Wang13, Li16, Brbic18} that is done by combining nuclear and $\ell_1$ norms as the measures of rank and sparsity, respectively. The motivation lies in the fact that minimization of these norms results in a convex optimization problem.

Although convex, nuclear and $\ell_1$ norms are not exact measures of rank and sparsity. Therefore, optimal solution of the nuclear and $\ell_1$ norms regularized objective is only approximate solution of the original problem \cite{Donoho03}. Proximity operator associated with the nuclear norm overpenalizes large singular values, leading to biased results in low-rank constrained optimization problems \cite{Larsson16, Parekh16}. Similarly, in sparsity regularized problems $\ell_1$ norm solution systematically underestimates high amplitude components of sparse vectors \cite{Selesnick17b}. Nonconvex regularizations based on $\ell_p$ quasi-norms ($0\leq p <1$) or their approximations have been proposed for various low-rank \cite{Larsson16, Parekh16, LuLin14, LuZhu15, Malek14, Kliesch16, LuTang16} and sparsity regularized problems \cite{Yuan17, Selesnick17b, Peharz12, Blumensath09, Mohimani09, Chartrand07, Daubechies04, XuChang12, Nikolova13}. 
Recently, nonconvex approximations of rank and sparsity have also been introduced in subspace clustering problem \cite{Yang16, Jiang16, Zhang16, Zhang18, Cheng17}. Specifically, $\ell_0$-induced sparse subspace clustering is introduced in \cite{Yang16}. The corresponding optimization problem is solved using proximal gradient descent which under assumptions on the sparse eigenvalues converges to a critical point. In \cite{Jiang16} authors replaced the nuclear norm regularizer with the nonconvex Ky Fan $p$-$k$ norm \cite{Doan16} and proposed proximal iteratively reweighted optimization algorithm to solve the problem. In \cite{Zhang16, Zhang19} rank is approximated using Schatten-$q$ quasi-norm regularization ($0<q<1$). The optimization problem in \cite{Zhang16} is solved using generalized matrix soft thresholding algorithm \cite{Zuo13}. Schatten-$q$ quasi-norm minimization with tractable $q=2/3$ and $q=1/2$ is proposed in \cite{Zhang18}. The Schatten-$q$ ($S_q$) quasi-norm for $0< q<1$ is equivalent to $\ell_q$ quasi-norm on vector of singular values. Compared to the nuclear norm, it makes a closer approximation of the rank function. In this regard, $S_0$ quasi-norm can be seen as an equivalent to the $\ell_0$ quasi-norm and stands for the definition of the rank function. Furthermore, \cite{Cheng17} combines $S_q$ regularizer ($0<q<1$) for low-rank and $\ell_p$ quasi-norm regularizer ($0<p<1$) for sparsity constraint. 
However, recent results in \cite{Zheng17} show that in $\ell_p$ regularized least squares ($0\leq p<1$) smaller values of $p$ lead to more accurate solutions. If $\ell_1$ norm is also considered, authors show that for large measurement noises $\ell_1$ outperforms $\ell_p$, $p<1$. However, for small measurement noises $\ell_0$ quasi-norm regularization outperforms $\ell_p$, $0<p\leq 1$.

Motivated by the limitations discussed above, we introduce two $S_0$/$\ell_0$ quasi-norm based nonconvex regularizations for low-rank sparse subspace clustering (LRSSC). 
First, we propose regularization based on multivariate generalization of the minimax-concave penalty function (GMC), introduced in \cite{Selesnick17b} for sparsity regularized linear least squares. Here, this approach is extended to the rank approximation. The GMC penalty enables to maintain the convexity of low-rank and sparsity constrained subproblems, while achieving better approximation of rank and sparsity than nuclear and $\ell_1$ norms. Importantly, this regularization is closely related to the continuous exact $\ell_0$ penalty which contains the global minimizers of $\ell_0$ quasi-norm regularized least-squares objective\cite{Soubies15, Selesnick17b}. GMC penalty yields solutions of low-rank and sparsity constrained subproblems based on firm thresholding of singular values and coefficients of representation matrix, respectively. The firm thresholding function  $\Theta: \mathbb{R}\rightarrow  \mathbb{R}$ is  is defined as \cite{Gao97}:
\begin{equation} \label{firm_thr}
\Theta(x; \lambda, a)=
\begin{cases}
0, & \text{if}\ |x|\leq\lambda \\
a(|x|-\lambda)/(a-\lambda)sign(x), & \text{if}\ \lambda\leq|x|\leq a \\
x, & \text{if}\ |x|\geq a.
\end{cases}
\end{equation}

Next, we propose the direct solution of $S_0$ and $\ell_0$ quasi-norms regularized objective function. The solution of corresponding low-rank and sparsity constrained subproblems is based on iterative application of hard thresholding operator \cite{Blumensath08, Le13, Liang16} on the singular values and coefficients of the representation matrix, respectively.  The hard thresholding function $H: \mathbb{R}\rightarrow  \mathbb{R}$ is defined as \cite{Blumensath08}:
\begin{equation} \label{hard_thr}
H(x; \lambda)=
\begin{cases}
x, & \text{if}\ |x|>\sqrt{2\lambda} \\
\{0,x\}, & \text{if}\ |x|=\sqrt{2\lambda} \\
0, & \text{if}\ |x|<\sqrt{2\lambda}.
\end{cases}
\end{equation}
Simultaneous rank and sparsity regularization is handled using the proximal average method, introduced in \cite{YuYao13} for convex problems and extended recently to nonconvex and nonsmooth functions \cite{YuXun15, LinWei16}. Proximal average allows us to approximate the proximal map of joint solution by averaging solutions obtained separately from low-rank and sparsity subproblems, leading to a problem with a low computational cost in each iteration. Furthermore, using proximal average method enables us to establish global convergence guarantee for $S_0/\ell_0$ regularized LRSSC. 

Better approximation of rank and sparsity is a consequence of the properties of firm and hard thresholding operators associated with GMC and $\ell_0$ regularizations. As opposed to them, the soft thresholding operator underestimates high amplitude coefficients in $\ell_1$ norm based sparsity regularized objective, as well as large singular values in low-rank approximation problem. As an example, Fig. \ref{Fig1} shows soft, firm and hard thresholding operators used in LRSSC, GMC-LRSSC and $S_0/\ell_0$-LRSSC, respectively. 

\begin{figure}[h]
	\centering
	\includegraphics[width=0.48\textwidth]{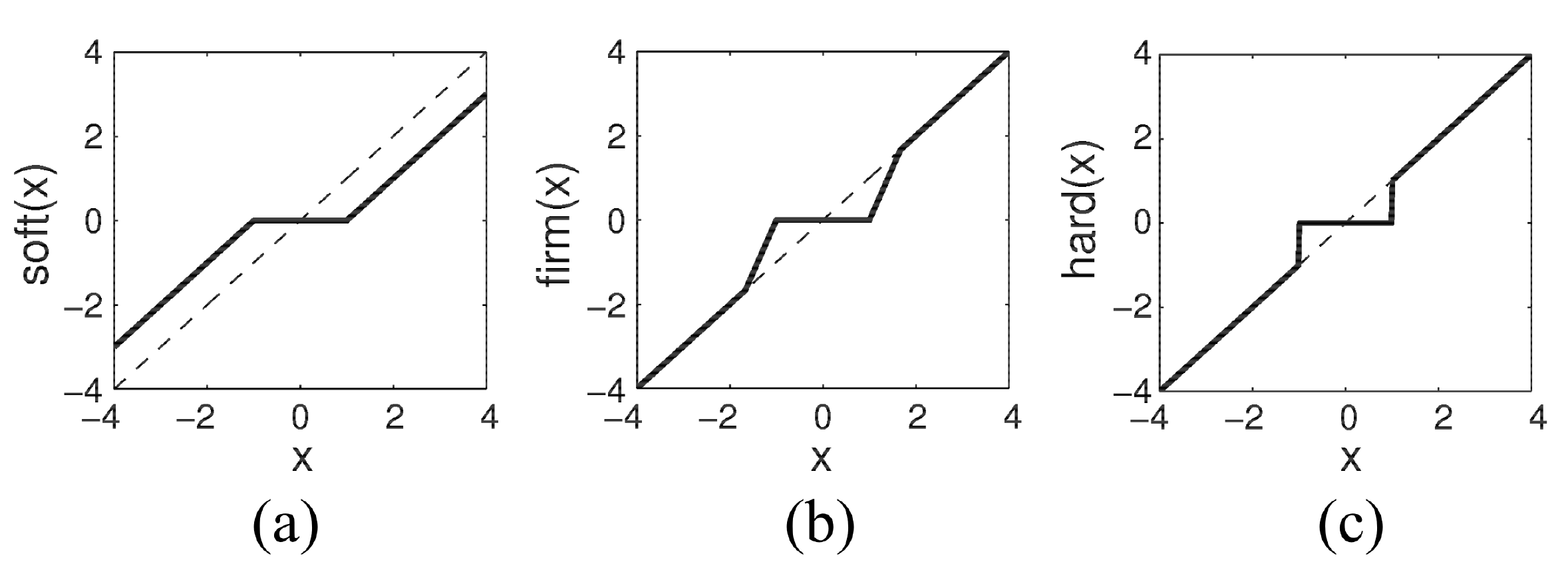}
	\caption{Proximity operators for threshold value $\lambda=1$. (a) Soft-thresholding operator $soft(x;\lambda)=sign(x)max(0,|x|-\lambda)$ is associated with $\ell_1$ norm. (b) Firm-thresholding operator defined in (\ref{firm_thr}) and associated with the scaled MC penalty and used in GMC-LRSSC formulation. Parameter $a$ is for visualization proposes set to $0.6$. (c) Hard-thresholding operator defined in (\ref{hard_thr}) and associated with $\ell_0$ quasi-norm.} 
	\label{Fig1}
\end{figure}

To solve corresponding optimization problems we derive algorithms based on computationally efficient Alternating Direction Method of Multipliers (ADMM) \cite{Boyd11}. Although ADMM has been successfully applied for many nonconvex problems \cite{Sun14, Zhang14, Dong13}, only recent theoretical results establish convergence of ADMM for certain nonconvex functions \cite{Hong16, WangYin15, LiTing15, WangCao15}. For GMC regularization, we show that the sequence generated by the algorithm is bounded and prove that any limit point of the iteration sequence is a stationary point. For $S_0/\ell_0$ regularization with proximal average approach, based on the property that $\ell_0$ and $S_0$ quasi-norms belong to a class of semialgebraic functions and satisfy the Kurdyka-\L{}ojasiewicz inequality \cite{Attouch13}, the global convergence of the algorithm can be guaranteed.
 
Experimental results on synthetic and four real-world datasets demonstrate that the proposed $\ell_0$ based low-rank sparse subspace clustering algorithms converge fast and to a point with lower or similar clustering error than the convex approximations with nuclear and $\ell_1$ norms. Compared to the state-of-the-art subspace clustering methods, the proposed algorithms perform better on four benchmark datasets.

\subsection{Contributions}
The contributions of this paper are summarized as follows:
\begin{enumerate}
\item We introduce nonconvex generalized minimax-concave penalty in the low-rank sparse subspace clustering problem, such that the global minimizers of the proposed objective coincide with that of a convex function defined using the continuous exact $\ell_0$ penalty \cite{Soubies15}. The introduced penalty maintains the convexity of the sparsity and low-rank constrained subproblems. The proximal operator of the related GMC penalty function is the firm thresholding function \cite{Selesnick17b}. 
\item We introduce $S_0$ and $\ell_0$ pseudo-norm regularizations for LRSSC. Using the proximal average method \cite{YuXun15, YuYao13}, we average the solutions of proximal maps of low-rank and sparsity subproblems, with the hard thresholding function as a proximity operator of the related penalties \cite{Blumensath08}.
\item We derive ADMM based optimization algorithms for LRSSC constrained either with a GMC penalty or with $S_0$/$\ell_0$ quasi-norms. Iterative firm or hard thresholding of singular values and coefficients of representation matrix is used to obtain the solution of rank and sparsity constrained subproblems.
\item We prove that the sequence generated by the GMC regularized LRSSC algorithm is bounded and that any limit point of the iteration sequence is a stationary point that satisfies Karush-Kuhn-Tucker (KKT) conditions. 
\item We establish the convergence property of the $S_0$/$\ell_0$ regularized approach with proximal average and show that the algorithm converges regardless of the initialization. To the best of our knowledge, we are the first to show convergence with $S_0$ and $\ell_0$ penalties in the low-rank and sparsity constrained optimization problem.
\end{enumerate}

The remainder of this paper is organized as follows. Section II gives a brief overview of the related work. In Section III and IV we introduce GMC and $S_0/\ell_0$ regularized low-rank sparse subspace clustering methods, respectively. We formulate the problem, present optimization algorithms, and analyze convergence and computational complexity. The experimental results on synthetic and four real-world datasets are presented in Section V. Finally, Section VI concludes this paper.

\subsection {Main Notation}

Scalars are denoted by lower case letters, vectors by bold lower-case letters, matrices are denoted by bold capital and subspaces by calligraphic letters. $\|\cdot\|_F$ denotes Frobenius norm defined as the square root of the sum of the squares of matrix elements. $\|\cdot\|_1$ denotes $\ell_1$ norm defined as the sum of absolute values of matrix elements. $\|\cdot\|_*$ denotes nuclear norm defined as the sum of singular values of a matrix. $\ell_0$ quasi-norm is denoted by $\|\cdot\|_0$ and 
for matrix $\bm{A} \in \mathbb{R}^{N\times M}$ defined as: 
\begin{equation*}
\|\bm{A}\|_0 = \# \big\{ a_{ij} \neq 0, \  i=1..N, \ j=1..M\big\},
\end{equation*}
where $\#$ denotes cardinality function. Schatten-$0$ quasi norm is denoted by $\|\cdot\|_{S_0}$ and defined as:
\begin{equation*}
\|\bm{A}\|_{S_0}= \|diag(\bm \Sigma)\|_0,
\end{equation*}
where $\bm{A}=\bm{U\Sigma V}^T$ is the singular value decomposition (SVD) of matrix $\bm{A}$. 
Since $\ell_0$ quasi-norm does not satisfy homogeneous property it is not a norm, but with a slight abuse of notation we will refer to it as the $\ell_0$ norm in the rest of the paper.
Null vector is denoted by $\bm{0}$ and $diag(\cdot)$ is the vector of diagonal elements of a matrix. Table \ref{notation_summary} summarizes some notations used in the paper.

\begin{table}[h]
\caption{Notations and abbreviations}
\fontsize{9}{11}\selectfont
\begin{center} \label{notation_summary} 
\begin{tabular}{ l  l  } 
\hline 
 \textbf{Notation} & \textbf{Definition} \\ 
 \hline
$N$ & Number of data points  \\ 
$n$ & Dimension of data points  \\ 
$L$ & Number of subspaces  \\ 
$\bm X \in \mathbb{R}^{n\times N}$ & Data matrix \\ 
$\bm C\in \mathbb{R}^{N\times N}$ & Representation matrix \\   
$\bm W\in \mathbb{R}^{N\times N}$ & Affinity matrix   \\ 
$\bm X=\bm{U\Sigma} \bm{V}^T$ & Singular value decomposition of $\bm X$   \\
$\sigma(\bm{X})$ & Vector of singular values of $\bm{X}$ \\
 \hline
\end{tabular}
\end{center}
\end{table} 

\section{Background}

Consider the data matrix $\bm X \in \mathbb{R}^{n\times N}$ the columns of which are data points drawn from a union of $L$ linear subspaces $\bigcup_{i=1}^{L}\mathcal{S}_i$ of unknown dimensions $\big\{d_i=\textnormal{dim}(\mathcal{S}_i)\big\}_{i=1}^L$ in $\mathbb{R}^n$. Let $\bm X_i \in \mathbb{R}^{n\times N_i}$ be a submatrix of $\bm X$ of rank $d_i$, $0<d_i<N_i$ and $\sum_{i=1}^{L}N_i=N$. Given data matrix $\bm X$, subspace clustering segments data points according to the low-dimensional subspaces. The first step is the construction of the affinity matrix $\bm W \in \mathbb{R}^{N\times N}$ whose elements represent the similarity between data points. An ideal affinity matrix is block diagonal (up to a permutation): non-zero distance is assigned to the points in the same subspace and zero distance to the points from different subspaces. Spectral clustering algorithm \cite{Shi2000, Ng01} is then applied to the affinity matrix to obtain memberships of data points to the subspaces. 


\subsection{Related Work}

Low-Rank Representation (LRR) \cite{Liu10b, Liu13} aims to find a low-rank representation matrix $\bm C\in \mathbb{R}^{N\times N}$ for input data matrix $\bm X$ by solving the following convex optimization problem:
\begin{equation} \label{LRR}
\min_{\bm C}\big\|\bm C\big\|_*\ \  s.t.\ \ \bm X=\bm{XC},
\end{equation}
where the nuclear norm is used to approximate the rank of $\bm C$.
Let $\bm X=\bm{U\Sigma V}^T$ be the SVD of $\bm X$. The closed form solution of problem (\ref{LRR}) is given by \cite{Liu13}:
\begin{equation}
\hat{\bm C}=\bm{V}\bm{V}^T.
\end{equation}

When data points are contaminated by additive white Gaussian noise (AWGN), the following minimization problem is solved:
\begin{equation} \label{LRR_noise}
\min_{\bm C}\frac{\lambda}{2}\big\|\bm X-\bm{XC}\big\|_F^2+\big\|\bm C\big\|_*,
\end{equation}
where $\lambda$ is the rank regularization constant.
The optimal solution of problem (\ref{LRR_noise}) is given by \cite{Favaro11, Vidal14}:
\begin{equation}
\hat{\bm C}=\bm{V}_1(\bm{I}-\frac{1}{\lambda}\bm \Sigma_1^{-2})\bm{V_1}^T,
\end{equation}
where $\bm{U}=[\bm{U}_1\ \bm{U}_2]$, $\bm{\bm \Sigma}=diag(\bm{\bm \Sigma}_1\ \bm{\bm \Sigma}_2)$ and $\bm{V}=[\bm{V}_1\ \bm{V}_2]$. Matrices are partitioned according to the sets $\mathcal{I}_1=\{i:\sigma_i>\frac{1}{\sqrt\lambda}\}$ and $\mathcal{I}_2=\{i:\sigma_i\leq\frac{1}{\sqrt\lambda}\}$, where $\sigma_i$ denotes $i$th singular value of $\bm X$.

Sparse Subspace Clustering (SSC) \cite{Elhamifar13} represents each data point as a sparse linear combination of other data points and solves the following convex optimization problem:
\begin{equation}
\min_{\bm C}\big\|\bm C\big\|_1\ \  s.t.\ \ \bm X=\bm{XC},\ diag(\bm C)=\bm{0},
\end{equation} 
where constraint $diag(\bm C)=\bm{0}$ is used to avoid trivial solution of representing a data point as a linear combination of itself.

For data contaminated by the AWGN, the following minimization problem is solved to approximate sparse representation matrix $\bm C$:
\begin{equation}
\min_{\bm C}\frac{1}{2}\big\|\bm X-\bm{XC}\big\|_F^2+\tau\big\|\bm C\big\|_1\ \  s.t.\ \ diag(\bm C)=\bm{0},
\end{equation}
where $\tau$ is the sparsity regularization constant.
This problem can be solved efficiently using ADMM optimization procedure \cite{Boyd11, Elhamifar13}.

Low-Rank Sparse Subspace Clustering (LRSSC) \cite{Wang13} requires that the representation matrix $\bm C$ is simultaneously low-rank and sparse. LRSSC solves the following problem:
\begin{equation}
\min_{\bm C}\lambda\big\|\bm C\big\|_*+\tau\big\|\bm C\big\|_1\ \  s.t.\ \  \bm X=\bm{XC},\ diag(\bm C)=\bm{0},
\end{equation}
where $\lambda$ and $\tau$ are rank and sparsity regularization constants, respectively.
For the AWGN corrupted data the following problem needs to be solved to approximate $\bm C$:
\begin{equation} \label{LRSSC_obj}
\begin{split}
\min_{\bm C}\frac{1}{2}\big\|\bm X&-\bm{XC}\big\|_F^2+\lambda\big\|\bm C\big\|_*+\tau\big\|\bm C\big\|_1\\&\ \ s.t.\ \  diag(\bm C)=\bm{0}.
\end{split}
\end{equation}

After representation matrix $\bm C$ is estimated, the affinity matrix $\bm W\in \mathbb{R}^{N\times N}$ is calculated as follows:
\begin{equation}
\bm W=|\bm C|+|\bm C|^T.
\end{equation}

In the next two sections, we introduce two nonconvex regularizers for the low-rank sparse subspace clustering. We formulate low-rank sparse subspace clustering problem in the following general form:
\begin{equation} \label{LRSSC_gen}
\begin{split}
\min_{\bm C}\frac{1}{2}\big\|\bm X&-\bm{XC}\big\|_F^2+\lambda g(\bm C)+\tau f(\bm C)\\&\ \  s.t.\ \ diag(\bm C)=\bm{0},
\end{split}
\end{equation}
where $g(\bm C)$ and $f(\bm C)$ are functions that, respectively, measure rank and sparsity of the data representation matrix $\bm C$. The convex formulation used in (\ref{LRSSC_obj}) implies $g(\bm C)=\big\|\bm C\big\|_*$ and $f(\bm C)=\big\|\bm C\big\|_1$.

\section{GMC-LRSSC Algorithm}

\subsection{Problem Formulation}
We propose to regularize rank and sparsity using multivariate GMC penalty function, introduced in \cite{Selesnick17b} for sparse regularized least-squares. 
We start with some definitions and results that will be used throughout the paper.

\newtheorem{definition}{Definition}
\newtheorem{lemma}{Lemma}
\newtheorem{proposition}{Proposition}
\newtheorem{theorem}{Theorem}
\newenvironment{proof}
{\textit{Proof:} }
{$\square$}

\begin{definition}[\cite{Selesnick17b}]
	Let $\bm z \in \mathbb{R}^{N}$ and $\bm B \in \mathbb{R}^{M\times N}$. The GMC penalty function $\psi_{\bm B}: \mathbb{R}^{N}\rightarrow  \mathbb{R}$ is defined as:
	\begin{equation} \label{GMC_def}
	\psi_{\bm B}(\bm z)=\|\bm z\|_1-S_{\bm B}(\bm z),
	\end{equation}
	where $S_{\bm B}:\mathbb{R}^{N} \rightarrow \mathbb{R}$ is the generalized Huber function defined as:
	\begin{equation} \label{Huber_def}
		S_{\bm B}(\bm z)=\inf_{\textbf{v}\in \mathbb{R}^{N}}\Big\{\|\bm{v}\|_1+\frac{1}{2}\|\bm B(\bm z-\bm{v})\|_2^2\Big\}.
		\end{equation}
\end{definition}

\begin{lemma}[\cite{Selesnick17b}]\label{lemma2}
	Let $\bm z\in \mathbb{R}^N$, $\bm{y}\in \mathbb{R}^M$, $\bm{A} \in \mathbb{R}^{M\times N}$ and $\lambda>0$. Define $F:\mathbb{R}^N\rightarrow \mathbb{R}$ as:
	\begin{equation}  \label{sparsity_reg}
	F(\bm z)=\frac{1}{2}\|\bm{y}-\bm{Az}\|_2^2+\lambda\psi_{\bm B}(\bm z),
	\end{equation}
	where $\psi_{\bm B}:\mathbb{R}^N\rightarrow \mathbb{R}$ is the GMC penalty. If $\bm{A}^T\bm{A}-\lambda\bm B^T\bm B$ is positive semidefinite matrix, $F$ is a convex function. The convexity condition is satisfied by setting:
	\begin{equation} \label{B_cond}
	\bm B = \sqrt{\gamma/\lambda}\bm{A}, \quad 0\leq\gamma\leq 1.
\end{equation}\end{lemma}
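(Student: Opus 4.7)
The plan is to show convexity of $F$ by rewriting the nonconvex-looking contribution $-\lambda S_{\bm B}(\bm z)$ as a supremum and then absorbing the resulting quadratic-in-$\bm z$ piece into the data-fit term, so that the positive-semidefiniteness of $\bm{A}^T\bm{A}-\lambda\bm B^T\bm B$ does the work of cancelling the problematic part. Explicitly, since an infimum negated is a supremum,
$$-\lambda S_{\bm B}(\bm z)=\sup_{\bm v\in\mathbb{R}^N}\left\{-\lambda\|\bm v\|_1-\tfrac{\lambda}{2}\|\bm B(\bm z-\bm v)\|_2^2\right\}.$$
Plugging this into the definition of $F$ and pulling the data-fit term inside the supremum gives
$$F(\bm z)=\lambda\|\bm z\|_1+\sup_{\bm v}\Big\{\tfrac{1}{2}\|\bm y-\bm{A}\bm z\|_2^2-\tfrac{\lambda}{2}\|\bm B(\bm z-\bm v)\|_2^2-\lambda\|\bm v\|_1\Big\}.$$

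Next I would expand the two quadratics and collect all the $\bm z$-dependent terms. The pure quadratic-in-$\bm z$ coefficient is precisely $\tfrac{1}{2}\bm z^T(\bm{A}^T\bm{A}-\lambda\bm B^T\bm B)\bm z$, while the remaining $\bm z$-dependence is affine in $\bm z$ for each fixed $\bm v$ (namely a cross-term $\lambda\bm z^T\bm B^T\bm B\bm v$ coming from expanding the second quadratic and the linear term $-\bm y^T\bm{A}\bm z$ from the first). The quadratic-in-$\bm z$ contribution is independent of $\bm v$, so it may be pulled out of the supremum; by the PSD hypothesis it is convex in $\bm z$. What remains inside the supremum is, for each $\bm v$, affine in $\bm z$, and the pointwise supremum of a family of affine functions is convex. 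Adding the convex $\ell_1$ term $\lambda\|\bm z\|_1$ yields the sum of three convex functions, hence $F$ is convex.

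For the sufficient condition (\ref{B_cond}), choosing $\bm B=\sqrt{\gamma/\lambda}\,\bm{A}$ gives $\bm{A}^T\bm{A}-\lambda\bm B^T\bm B=(1-\gamma)\bm{A}^T\bm{A}$, which is positive semidefinite exactly when $0\leq\gamma\leq 1$, so the first part of the lemma applies.

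The main obstacle is really just the bookkeeping in the quadratic expansion: one must verify that the coefficient of the purely $\bm z$-quadratic term reduces cleanly to $\bm{A}^T\bm{A}-\lambda\bm B^T\bm B$, with all other $\bm z$-dependence being affine. The conceptual heart of the argument is the sup-of-affine trick, which relies critically on rewriting the generalized Huber infimum as a supremum before combining with the data-fit quadratic.
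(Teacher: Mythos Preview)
Your argument is correct and is essentially the standard proof of this result: rewrite $-S_{\bm B}$ as a supremum, pull the $\bm v$-independent quadratic $\tfrac{1}{2}\bm z^T(\bm A^T\bm A-\lambda\bm B^T\bm B)\bm z$ out of the sup, and observe that what remains is a pointwise supremum of functions affine in $\bm z$. Note, however, that the paper does not supply its own proof of this lemma; it is quoted directly from \cite{Selesnick17b}, so there is no in-paper proof to compare against. Your derivation matches the argument given in that reference.
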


The parameter $\gamma$ controls the nonconvexity of the penalty $\psi_{\bm B}$. Larger values of $\gamma$ increase the nonconvexity of the penalty. $\ell_1$ norm can be seen as a special case of this penalty by setting $\gamma=0$.

\begin{lemma}[\cite{Selesnick17b}]\label{lemma3}
	Let $\bm z\in \mathbb{R}^N$, $\bm{y}\in \mathbb{R}^M$, $\bm{A} \in \mathbb{R}^{M\times N}$ and $\lambda>0$. If $\bm{A}^T\bm{A}$ is diagonal with positive entries and $\bm B$ is given by (\ref{B_cond}), then for $0< \gamma \leq 1$ the minimizer of $F$ is given by element-wise firm thresholding. Formally, if
	\begin{equation}
	\bm{A}^T\bm{A} = diag(\alpha_1^2,...,\alpha_N^2),
	\end{equation}
	then
	\begin{equation}
	\bm z_n^{\text{opt}}=\Theta([\bm{A}^T\bm{y}]_n/\alpha_n^2;\lambda/\alpha_n^2,\lambda/(\gamma\alpha_n^2)),
	\end{equation}
	where $\Theta$ stands for the firm thresholding function \cite{Gao97} defined entry-wise in (\ref{firm_thr}).
\end{lemma}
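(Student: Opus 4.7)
The plan is to combine two facts about $F$: by Lemma~\ref{lemma2} it is convex for $0<\gamma\leq 1$, so any stationary point is a global minimizer; and the generalized Huber function $S_{\bm B}$ in (\ref{Huber_def}) is a Moreau-type envelope of $\|\cdot\|_1$, hence convex and continuously differentiable with $\nabla S_{\bm B}(\bm z) = \bm B^T\bm B\,(\bm z - \bm v^\star(\bm z))$, where $\bm v^\star(\bm z)$ is the (unique) minimizer in (\ref{Huber_def}). First I would establish this gradient formula via the envelope/Danskin theorem and substitute the relation $\lambda\bm B^T\bm B = \gamma\bm A^T\bm A$ coming from (\ref{B_cond}).

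Next, I would write the optimality condition $\bm 0\in\partial F(\bm z^\star)$ explicitly and, using the diagonal structure of $\bm A^T\bm A$, reduce it to the componentwise subdifferential inclusion
\begin{equation*}
w_n - (1-\gamma)z^\star_n - \gamma v^\star_n \in (\lambda/\alpha_n^2)\,\partial|z^\star_n|,
\end{equation*}
where $w_n := [\bm A^T\bm y]_n/\alpha_n^2$, paired with the scalar relation $v^\star_n = \mathrm{soft}\bigl(z^\star_n;\, \lambda/(\gamma\alpha_n^2)\bigr)$ that characterizes $\bm v^\star$ (again by the diagonality of $\bm A^T\bm A$). This decouples the $N$-dimensional problem into $N$ independent scalar problems.

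Writing $T_2:=\lambda/\alpha_n^2$ and $T_1:=\lambda/(\gamma\alpha_n^2)$, so that $T_2\leq T_1$ for $\gamma\leq 1$, I would then resolve each scalar problem by a case split on $|z^\star_n|$ versus these thresholds: the regime $|z^\star_n|>T_1$ gives $v^\star_n = z^\star_n - T_1\,\mathrm{sign}(z^\star_n)$ and collapses to $z^\star_n = w_n$ with $|w_n|>T_1$; the regime $0<|z^\star_n|\leq T_1$ gives $v^\star_n=0$ and yields the affine interpolation $z^\star_n = (w_n - T_2\,\mathrm{sign}(w_n))/(1-\gamma)$, consistent exactly when $T_2<|w_n|\leq T_1$; and $z^\star_n=0$ requires $|w_n|\leq T_2$. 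Recognizing the three cases as the piecewise definition of $\Theta(w_n; T_2, T_1)$ in (\ref{firm_thr}) finishes the proof.

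The main obstacle will be making the subdifferential calculus rigorous, namely (i) justifying the gradient formula for $S_{\bm B}$ (uniqueness of $\bm v^\star(\bm z)$ is automatic here since the $\bm v$-subproblem decouples into strictly convex scalar soft-thresholding problems), and (ii) treating the boundary values $|w_n|\in\{T_2,T_1\}$ in the case analysis so that the multi-valued $\partial|\cdot|$ selects exactly the values dictated by the firm thresholding function. The two endpoints of $0<\gamma\leq 1$ play complementary roles: $\gamma>0$ keeps both thresholds finite and makes the scalar $v$-problem strictly convex, while $\gamma\leq 1$ is precisely what Lemma~\ref{lemma2} requires for convexity of $F$, and hence for the stationarity condition above to be sufficient for global optimality.
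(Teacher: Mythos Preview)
The paper does not prove Lemma~\ref{lemma3} at all: it is quoted verbatim from \cite{Selesnick17b} and used as a black box. So there is no ``paper's own proof'' to compare against.

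That said, your proposal is a correct self-contained derivation. The structure---convexity from Lemma~\ref{lemma2}, differentiability of $S_{\bm B}$ via its Moreau-envelope form, and then a scalar case analysis---is exactly the natural route, and your computations in each regime check out (in particular $\gamma T_1=T_2$ makes the large-$|z|$ case collapse to $z_n^\star=w_n$, and $T_1/(T_1-T_2)=1/(1-\gamma)$ matches the slope in (\ref{firm_thr})). Two minor points worth tightening: (i) at $\gamma=1$ the middle interval degenerates ($T_1=T_2$) and the minimizer can be nonunique on the measure-zero set $|w_n|=T_2$, so ``the minimizer'' should be read as ``a minimizer''; (ii) the sign consistency in the intermediate case ($\mathrm{sign}(z_n^\star)=\mathrm{sign}(w_n)$) deserves one explicit line, since it is what lets you replace $\mathrm{sign}(z_n^\star)$ by $\mathrm{sign}(w_n)$ in the final formula.
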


\begin{definition}[\cite{Lewis05, Sun17}]\label{def3}
	Function $f: \mathbb{R}^N\rightarrow \mathbb{R}$ is an absolutely symmetric function, if:
	\begin{equation}
	f(z_1,z_2,...,z_N)=f(|z_{\pi(1)}|,|z_{\pi(2)}|,...,|z_{\pi(N)}|),
	\end{equation} 
	holds for any permutation $\pi$ of $\{1,...,N\}$.
\end{definition}

\begin{proposition} \label{propos1}
	Let $\bm B^T\bm B$ be a diagonal matrix and $\psi_{\bm B}$ be the GMC penalty function defined in (\ref{GMC_def}). The subdifferential of singular value function $\psi_{\bm B} \circ \sigma$ of a matrix $\bm X$ is given by the following equation:
	\begin{equation}
	\partial [(\psi_{\bm B} \circ \sigma)(\bm X)]=\bm{U}diag(\partial\psi_{\bm B}[\sigma_i(\bm X)])\bm{V}^T,
	\end{equation}
	where $\bm X=\bm{U}\bm \Sigma\bm{V}^T$ is the SVD of $\bm X$.
\end{proposition}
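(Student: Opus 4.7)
My plan is to prove the proposition by invoking Lewis's theorem on subdifferentials of spectral functions \cite{Lewis05}, which is the natural tool here. That theorem asserts that whenever $f:\mathbb{R}^N\rightarrow \mathbb{R}$ is absolutely symmetric in the sense of Definition \ref{def3}, the subdifferential of the spectral function $f\circ\sigma$ at a matrix $\bm{X}$ with SVD $\bm{X}=\bm{U}\bm{\Sigma}\bm{V}^T$ satisfies $\partial(f\circ\sigma)(\bm{X})=\bm{U}\,\mathrm{diag}(\partial f(\sigma(\bm{X})))\,\bm{V}^T$. Once the spectral function hypothesis is verified for $f=\psi_{\bm{B}}$, the claim becomes a direct specialization.

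The key step is therefore to verify that $\psi_{\bm{B}}$ is absolutely symmetric under the assumption that $\bm{B}^T\bm{B}$ is diagonal. Since $\psi_{\bm{B}}(\bm{z})=\|\bm{z}\|_1-S_{\bm{B}}(\bm{z})$ and the $\ell_1$ norm is manifestly permutation- and sign-invariant, it suffices to handle the generalized Huber term $S_{\bm{B}}$. Writing $\bm{B}^T\bm{B}=\mathrm{diag}(b_1,\dots,b_N)$, the inner objective in (\ref{Huber_def}) separates as
\begin{equation*}
\|\bm{v}\|_1+\tfrac{1}{2}\|\bm{B}(\bm{z}-\bm{v})\|_2^2=\sum_{i=1}^{N}\Bigl(|v_i|+\tfrac{b_i}{2}(z_i-v_i)^2\Bigr),
\end{equation*}
so the infimum decouples into $N$ independent scalar minimizations $\inf_{v_i}\{|v_i|+\tfrac{b_i}{2}(z_i-v_i)^2\}$. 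By the substitution $v_i\mapsto-v_i$, each scalar infimum depends on $z_i$ only through $|z_i|$, which gives sign invariance. Permutation invariance then follows (noting that, as in the construction $\bm{B}=\sqrt{\gamma/\lambda}\bm{A}$ of Lemma \ref{lemma2} combined with Lemma \ref{lemma3}, the diagonal of $\bm{B}^T\bm{B}$ is understood to be symmetric under the relevant coordinate permutations acting on $\sigma(\bm{X})$), so $S_{\bm{B}}$ and hence $\psi_{\bm{B}}$ satisfy Definition \ref{def3}.

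With absolute symmetry of $\psi_{\bm{B}}$ in hand, applying Lewis's spectral subdifferential formula directly to $\psi_{\bm{B}}\circ\sigma$ evaluated at $\bm{X}=\bm{U}\bm{\Sigma}\bm{V}^T$ yields exactly the claimed identity $\partial[(\psi_{\bm{B}}\circ\sigma)(\bm{X})]=\bm{U}\,\mathrm{diag}(\partial\psi_{\bm{B}}[\sigma_i(\bm{X})])\,\bm{V}^T$. The only nontrivial obstacle I anticipate is the verification of absolute symmetry: the sign-invariance part is a one-line change of variables, but the permutation-invariance part requires care about the diagonal structure of $\bm{B}^T\bm{B}$, since a diagonal with distinct entries is not invariant under arbitrary permutations. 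I would expect to either invoke the scalar-multiple-of-identity form of $\bm{B}^T\bm{B}$ that arises from the LRSSC construction, or to state the proposition on the class of matrices whose singular values are ordered compatibly with the diagonal of $\bm{B}^T\bm{B}$; apart from this bookkeeping, the rest of the proof is an immediate appeal to \cite{Lewis05,Sun17}.
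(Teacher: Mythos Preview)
Your approach is essentially the same as the paper's: establish that $\psi_{\bm B}$ is absolutely symmetric (the paper does this by noting that when $\bm B^T\bm B$ is diagonal the GMC penalty separates into a sum of scalar MC penalties $\phi_{\alpha_n}$), then invoke the spectral subdifferential formula from \cite{Lewis05,Sun17}. Your treatment is in fact more careful on the permutation-invariance point: the paper simply asserts that separability implies absolute symmetry without addressing the case of distinct diagonal entries of $\bm B^T\bm B$, whereas you correctly flag that full permutation invariance really requires $\bm B^T\bm B$ to be a scalar multiple of the identity --- which is exactly how $\bm B$ is chosen in the subsequent ADMM updates for $\bm C_1$ and $\bm C_2$.
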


\begin{proof}
It follows from \cite{Selesnick17b} that if $\bm B^T\bm B$ is a diagonal matrix, the GMC penalty $\psi_{\bm B}$ is separable, comprising a sum of scalar MC penalties:
	\begin{equation}
	\bm B^T\bm B = diag({\alpha_1}^2,...,{\alpha_N}^2) \Rightarrow \psi_{\bm B}(\bm z)=\sum_{n=1}^{N}\phi_{\alpha_n}(z_n),
	\end{equation}
	where $\phi_b:\mathbb{R}\rightarrow\mathbb{R}$ is the scaled MC penalty \cite{Zhang10, Selesnick17b} defined as:
		\begin{equation} {\label{scaled_MC_penalty}}
		\phi_b(y) = 
		   \begin{cases}
		    |y|-\frac{1}{2}b^2y^2, & \text{if}\ |y|\leq1/b^2, \\
		      \frac{1}{2b^2}, & \text{otherwise}.
		    \end{cases}
		\end{equation}
Therefore, according to Definition \ref{def3}, $\psi_{\bm B}$ is an absolutely symmetric function. The proof of the proposition then follows from the property of the singular value function $f\circ \sigma$ \cite{Sun17}, where $f$ is an absolutely symmetric function. 
\end{proof}

Proposition \ref{propos1} allows us to use GMC penalty for rank approximation. We formulate GMC penalty regularized objective for low-rank sparse subspace clustering. Let $\bm B\in\mathbb{R}^{N\times N}$, and let ${\sigma}(\bm C)$ denote vector of singular values of $\bm C$. By choosing $g(\bm C)=\psi_{\bm B}({\sigma}(\bm C))$ as a rank function, and $f(\bm C)=\psi_{\bm B}(\bm C)$ as a sparsity function in equation (\ref{LRSSC_gen}), we define the following nonconvex objective function:
\begin{equation} \label{LRSSC_gmc}
\begin{split}
\min_{\bm C}\frac{1}{2}\big\|\bm X-&\bm{XC}\big\|^2_F+\lambda \psi_{\bm B}({\sigma}(\bm C))+\tau \psi_{\bm B}(\bm C)\\&\ \  s.t.\ \ diag(\bm C)=\bm{0},
\end{split}
\end{equation}
where $\psi_{\bm B}$ denotes GMC penalty defined in (\ref{GMC_def}), regularized by matrix $\bm B$. In the next section we will show that by solving the objective (\ref{LRSSC_gmc}) with ADMM, both sparsity and low-rank subproblems can be reduced to the equation (\ref{sparsity_reg}) with diagonal $\bm{A}^T\bm{A}$. In this case, GMC penalty is closely related to the continuous exact $\ell_0$ penalty \cite{Selesnick17b, Soubies15}, that approximates the convex hull of the least squares with $\ell_0$ regularization. Furthermore, diagonal $\bm{A}^T\bm{A}$ reduces both subproblems to element-wise firm thresholding function, defined in (\ref{firm_thr}). In low-rank minimization subproblem, the firm thresholding operator needs to be applied to the vector of singular values. 

\subsection{Optimization Algorithm}

To solve optimization problem in (\ref{LRSSC_gmc}), we introduce auxiliary variables $\bm J$, $\bm C_1$ and $\bm C_2$ to split variables and solve subproblems independently. The reformulated objective for GMC penalty in (\ref{LRSSC_gmc}) is equivalent to:
\begin{equation} \label{LRSSC_gmc_reform}
\begin{split}
\min_{\bm J, \bm C_1, \bm C_2}& \frac{1}{2}\big\|\bm X-\bm{XJ}\big\|^2_F+\lambda \psi_{\bm B}(\sigma(\bm C_1))+\tau \psi_{\bm B}(\bm C_2)\ \  \\& s.t.\ \ \bm J=\bm C_1,\ \ \bm J=\bm C_2-diag(\bm C_2),
\end{split}
\end{equation}
The augmented Lagrangian function of (\ref{LRSSC_gmc_reform}) is:
\begin{equation} \label{Lag}
\begin{split}
&\mathcal{L}_{\mu_1,\mu_2}\big(\bm J,\bm C_1,\bm C_2, \bm\Lambda_1,\bm\Lambda_2\big) = \frac{1}{2}\big\|\bm X-\bm{XJ}\big\|^2_F+\lambda \psi_{\bm B}( \sigma(\bm C_1))\\&+\tau \psi_{\bm B}(\bm C_2)+\frac{\mu_1}{2}\big\|\bm J-\bm C_1\big\|^2_F+\frac{\mu_2}{2}\big\|\bm J-\bm C_2+diag(\bm C_2)\big\|^2_F \\&+\big\langle\bm\Lambda_1, \bm J-\bm C_1\big\rangle+\big\langle\bm\Lambda_2, \bm J-\bm C_2+diag(\bm C_2)\big\rangle,
\end{split}
\end{equation}
where $\mu_1$, $\mu_2>0$ are penalty parameters and $\bm\Lambda_1$, $\bm\Lambda_2$ are Lagrange multipliers.

\textit{Update rule for $\bm J^{k+1}$}: Given $\bm C_1^k$, $\bm C_2^k$, $\bm\Lambda_1^k$, $\bm\Lambda_2^k$, $\mu_1^k$, $\mu_2^k$, we minimize the Lagrangian function in (\ref{Lag}) with respect to $\bm J$:
\begin{equation} \label{J_update}
\begin{split}
&\min_{\bm J}\mathcal{L}_{\mu_1^k,\mu_2^k}\big(\bm C_1^k,\bm C_2^k, \bm J, \bm\Lambda_1^k, \bm\Lambda_2^k\big)=
\min_{\bm J}\frac{1}{2}\big\|\bm X-\bm{XJ}\big\|_F^2\\&+\frac{\mu_1}{2}\big\|\bm J-\bm C_1^k\big\|_F^2+\frac{\mu_2}{2}\big\|\bm J-\bm C_2^k+diag(\bm C_2^k)\big\|^2_F+\big\langle\bm\Lambda_1^k, \bm J-\bm C_1^k\big\rangle \quad\\&+\big\langle\bm\Lambda_2^k,\bm J-\bm C_2^k+diag(\bm C_2^k)\big\rangle.
\end{split}
\end{equation}
The optimal solution of (\ref{J_update}) is given by the following update:
\begin{equation} \label{J_rule}
\bm J^{k+1}= \big[\bm X^T\bm X+(\mu_1^k+\mu_2^k)\bm{I}\big]^{-1}\big[\bm X^T\bm X+\mu_1^k{\bm C_1}^k+\mu_2^k{\bm C_2}^k-{\bm\Lambda_1}^k-{\bm\Lambda_2}^k\big].
\end{equation}
\textit{Update rule for $\bm C_1^{k+1}$:} Given $\bm J^{k+1}$, $\bm\Lambda_1^k$, $\mu_1^k$, we minimize the Lagrangian function in (\ref{Lag}) with respect to $\bm C_1$:
\begin{equation}\label{C1_min_gmc}
\begin{split}
&\min_{\bm C_1}\mathcal{L}_{\mu_1^k,\mu_2^k}\big(\bm J^{k+1}, \bm C_1,\bm C_2^k, \bm\Lambda_1^k, \bm\Lambda_2^k\big)\\&=\min_{\bm C_1}\lambda \psi_{\bm B}(\sigma(\bm C_1))+\frac{\mu_1^k}{2}\big\|\bm J^{k+1}-\bm C_1\big\|_F^2+\big\langle\bm\Lambda_1^k, \bm J^{k+1}-\bm C_1\big\rangle\\
&=\min_{\bm C_1}\lambda \psi_{\bm B}(\sigma(\bm C_1)) +\frac{\mu_1^k}{2}\Big\|\bm J^{k+1}+\frac{\bm\Lambda_1^k}{\mu_1^k}-\bm C_1\Big\|_F^2.
\end{split}
\end{equation}

It can be seen that (\ref{C1_min_gmc}) corresponds to the least squares problem in (\ref{sparsity_reg}) with $\bm{A}^T\bm{A}=\bm{I}$ and therefore, diagonal. It follows from the condition ($\ref{B_cond}$) that in order to maintain convexity of the subproblem, we need to set $\bm B=\sqrt{\mu_1^k\gamma/\lambda}\bm{I}$, $0< \gamma \leq 1$. Using Lemma \ref{lemma3} and Proposition \ref{propos1}, (\ref{C1_min_gmc}) can be solved by element-wise firm thresholding of singular values of matrix $\big(\bm J^{k+1}+\bm\Lambda_1^k/\mu_1^k\big)$.

Specifically, let $\bm{U\Sigma V}^T$ denote the SVD of matrix $\big(\bm J^{k+1}+{\bm\Lambda_1}^k/\mu_1^k\big)$. The closed-form solution of (\ref{C1_min_gmc}) is given by:
\begin{equation} \label{C1_update_gmc}
{\bm C_1}^{k+1}=\bm{U}\Theta\Big(\bm \Sigma;\frac{\lambda}{\mu_1^k},\frac{\lambda}{\gamma\mu_1^k}\Big)\bm{V}^T,
\end{equation}
where $\Theta$ is the firm thresholding function defined in (\ref{firm_thr}).

\textit{Update rule for $\bm C_2^{k+1}$:} Given $\bm J^{k+1}$, $\bm\Lambda_2^k$, $\mu_2^k$, we minimize the objective (\ref{Lag}) with respect to $\bm C_2$:
\begin{equation}\label{C2_min_gmc}
\begin{split}
&\min_{\bm C_2}\mathcal{L}_{\mu_1^k,\mu_2^k}\big(\bm J^{k+1}, \bm C_1^{k+1},\bm C_2, \bm\Lambda_1^k, \bm\Lambda_2^k\big)\\
&=\min_{\bm C_2}\tau\psi_{\bm B}(\bm C_2)+\frac{\mu_2^k}{2}\Big\|\bm J^{k+1}+\frac{\bm\Lambda_2^k}{\mu_2^k}-\bm C_2\Big\|_F^2,
\end{split}
\end{equation}
with subtraction of diagonal elements of $\bm C_2^{k+1}$:
\begin{equation} \label{diag_sub}
\bm C_2^{k+1}\leftarrow{\bm C_2}^{k+1}-diag\big({\bm C_2}^{k+1}\big).
\end{equation} 
Similarly as the update for $\bm C_1$, matrix $\bm{A}^T\bm{A}$ is diagonal matrix and we can ensure convexity of the subproblem (\ref{C2_min_gmc}) by setting $\bm B=\sqrt{\mu_2^k\gamma/\tau}\bm{I}$, $0< \gamma \leq 1$. The problem (\ref{C2_min_gmc}) is then solved by firm thresholding elements of matrix $\big(\bm J^{k+1}+{\bm\Lambda_2^k}/\mu_2^k\big)$ and given by:
\begin{equation} \label{C2_update_gmc}
\begin{split}
&{\bm C_2}^{k+1}=\Theta\Big(\bm J^{k+1}+\frac{\bm\Lambda_2^k}{\mu_2^k};\frac{\tau}{\mu_2^k},\frac{\tau}{\gamma\mu_2^k}\Big),
\\&\bm C_2^{k+1}\leftarrow{\bm C_2}^{k+1}-diag\big({\bm C_2}^{k+1}\big).
\end{split}
\end{equation}
\textit{Update rules for Lagrange multipliers $\bm\Lambda_1^{t+1},\bm\Lambda_2^{t+1}$:} Given $ \bm J^{k+1}$, $\bm C_1^{k+1}$, $\bm C_2^{k+1}$, $\mu_1^k$, $\mu_2^k$, Lagrange multipliers are updated with the following equations:
\begin{equation}\label{Lambda_rule}
\begin{split}
{\bm\Lambda_1^{k+1}}&={\bm\Lambda_1^k}+\mu_1^k\big(\bm J^{k+1}-\bm C_1^{k+1}\big)\\
{\bm\Lambda_2^{k+1}}&={\bm\Lambda_2^k}+\mu_2^k\big(\bm J^{k+1}-\bm C_2^{k+1}\big).\\
\end{split}
\end{equation}
Penalty parameters $\mu_1$, $\mu_2$ are are in each step $k$ updated according to:
\begin{equation} \label{mu_update}
{\mu_i}^{k+1}=\min(\rho{\mu_i}^{k}, {\mu}^{max}),\ \ i=1,2, 
\end{equation}
where $\rho>1$ is step size for adaptively changing $\mu_1, \mu_2$. Due to numerical reasons $\mu_1$, $\mu_2$ are bounded with $\mu^{max}$, while in the convergence proof we use formulation ${\mu_i}^{k+1}=\rho{\mu_i}^{k}$, $i=1,2$.

The main steps of the proposed algorithm are summarized in Algorithm \ref{alg1}.

\renewcommand{\algorithmicrequire}{\textbf{Input:}}
\renewcommand{\algorithmicensure}{\textbf{Output:}}
\begin{algorithm}[H]
	\caption{GMC-LRSSC by ADMM optimization}
	\label{alg1}
	\begin{algorithmic}[1]
		\REQUIRE Data points as columns in $\bm X $ 
		, $\{\tau, \lambda\}>0$, $0<\gamma\leq 1$
		\ENSURE Assignment of the data points to $k$ clusters\\
		\STATE Initialize: $\big\{\bm J, \bm C_1,\bm C_2,\bm\Lambda_1, \bm\Lambda_2\big\}=\bm 0$, $\{\mu_i^{(0)}>0\}_{i=1}^2$, $\rho>1$\\
		\STATE Compute $\bm X^T\bm X$ for later use\\
		\WHILE{not converged}
		\STATE Update $\bm J^{k+1}$ by (\ref{J_rule})\\
		\STATE Normalize columns of $\bm J$ to unit $\ell_2$ norm\\
		\STATE Update $\bm C_1^{k+1}$ by (\ref{C1_update_gmc}) \\
		\STATE Update $\bm C_2^{k+1}$ by (\ref{C2_update_gmc})\\
		\STATE Update $\bm\Lambda_1^{k+1}$, $\bm\Lambda_2^{k+1}$ by (\ref{Lambda_rule})\\ 
		\STATE Update $\mu_i^{k+1}=\min(\rho\mu_i^{k}, \mu^{max})$, $i=1,2$\\
		\ENDWHILE
		\STATE Calculate affinity matrix $\bm W=|\bm C_{1}|+|\bm C_{1}|^T$\\
		\STATE Apply spectral clustering \cite{Ng01} to $\bm W$\\
	\end{algorithmic}
\end{algorithm}

\subsection{Convergence Analysis}

Although choosing $\gamma \in [0,1]$ guarantees convexity of the low-rank and sparsity subproblems and convergence of related subproblems, the objective in (\ref{LRSSC_gmc}) is nonconvex. In this section, we analyze convergence of the proposed method and show that any limit point of iteration sequence satisfies Karush-Kuhn-Tucker (KKT) conditions \cite{Kuhn51}.

\begin{proposition} \label{boundness}
	The sequences $\big\{(\bm J^k, \bm C_1^k, \bm C_2^k, \bm \Lambda_1^k, \bm\Lambda_2^k)\big\}$ generated by Algorithm 1 are all bounded.
\end{proposition}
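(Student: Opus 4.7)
The plan is to bound the three groups of iterates in a specific order: first the Lagrange multipliers via the first-order optimality conditions of the convex $\bm C_1$ and $\bm C_2$ subproblems, and then the primal variables by exploiting the explicit column-normalization step and the contractive nature of the firm-thresholding proximal maps.

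\emph{Step 1 -- bound the duals.} By Lemma \ref{lemma2}, for $\bm B$ chosen according to (\ref{B_cond}) both the $\bm C_1$- and $\bm C_2$-subproblems (\ref{C1_min_gmc}) and (\ref{C2_min_gmc}) are convex, so their minimizers satisfy $0\in\partial(\cdot)$. Writing the KKT condition for the $\bm C_1^{k+1}$-update and folding in the multiplier recursion (\ref{Lambda_rule}) yields $\bm\Lambda_1^{k+1}\in\lambda\,\partial(\psi_{\bm B}\circ\sigma)(\bm C_1^{k+1})$. By Proposition \ref{propos1} this subdifferential has the form $\bm U\,\mathrm{diag}(\xi_i)\,\bm V^T$ with $\xi_i\in\partial\phi_{b}(\sigma_i(\bm C_1^{k+1}))$, and the scaled MC penalty (\ref{scaled_MC_penalty}) has subgradient entries contained in $[-1,1]$ at every point. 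This gives a uniform spectral bound $\|\bm\Lambda_1^{k+1}\|_2\le\lambda$, hence $\|\bm\Lambda_1^{k+1}\|_F\le\lambda\sqrt{N}$, independent of $k$. The analogous computation for (\ref{C2_min_gmc}) gives an entrywise bound $\|\bm\Lambda_2^{k+1}\|_\infty\le\tau$ (up to the diagonal issue treated below).

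\emph{Step 2 -- bound the primals.} The explicit normalization in line 5 of Algorithm \ref{alg1} enforces $\|\bm J^{k+1}\|_F=\sqrt{N}$ for every $k$, so $\{\bm J^k\}$ is bounded for free. For $\bm C_1$, the firm-thresholding function satisfies $|\Theta(x;\lambda,a)|\le|x|$ entrywise, directly from (\ref{firm_thr}); applying this to the singular values in (\ref{C1_update_gmc}) gives $\|\bm C_1^{k+1}\|_F\le\|\bm J^{k+1}+\bm\Lambda_1^k/\mu_1^k\|_F\le\sqrt{N}+\lambda\sqrt{N}/\mu_1^0$, and the same reasoning applied entrywise to (\ref{C2_update_gmc}) (the diagonal subtraction only decreases the Frobenius norm) bounds $\|\bm C_2^{k+1}\|_F$. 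Step 1 together with the uniform lower bound $\mu_i^k\ge\mu_i^0>0$ then closes the chain.

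\emph{Main obstacle.} I expect the delicate point to be the diagonal-subtraction step (\ref{diag_sub}). The clean identity $\bm\Lambda_2^{k+1}\in\tau\,\partial\psi_{\bm B}(\bm C_2^{k+1})$ used in Step 1 holds for the pre-projection minimizer $\tilde{\bm C}_2^{k+1}$, and the update (\ref{Lambda_rule}) for $\bm\Lambda_2^{k+1}$ picks up an extra term $\mu_2^k\,\mathrm{diag}(\tilde{\bm C}_2^{k+1})$ that is not directly controlled by the bounded-subgradient argument on the diagonal. My fallback is a complementary augmented-Lagrangian bound: (i) the three primal minimizations decrease $\mathcal{L}_{\mu_1^k,\mu_2^k}$ (convexity of the $\bm C_1,\bm C_2$ subproblems via Lemma \ref{lemma2} plus exact quadratic minimization in $\bm J$), and (ii) the change induced by the dual and penalty updates is of order $\|\bm\Lambda_i^{k+1}-\bm\Lambda_i^k\|_F^2/\mu_i^k$, which is summable because $\mu_i^k=\rho^k\mu_i^0$ grows geometrically while $\bm\Lambda_i^k$ is bounded from Step 1. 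Summing shows $\mathcal{L}^k$ stays below a finite constant; completing the square in (\ref{Lag}) rewrites $\mathcal{L}^k$ as a sum of non-negative terms (data fit, both GMC penalties, two quadratic mismatches) minus the bounded quantities $\|\bm\Lambda_i^k\|_F^2/(2\mu_i^k)$, so the upper bound on $\mathcal{L}^k$ transfers to separate bounds on each block -- in particular capturing the diagonal component of $\bm\Lambda_2^k$ that escaped the direct subdifferential argument.
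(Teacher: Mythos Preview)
Your Step~1 --- bounding the duals through the first-order optimality conditions and the bounded subdifferential of the scaled MC penalty --- is exactly the paper's opening move; the paper obtains $|[\bm\Lambda_2^{k+1}]_{ij}|\le 1$ from $\partial\phi_b\subset[-1,1]$ and handles $\bm\Lambda_1$ analogously via Proposition~\ref{propos1}.

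Where you diverge is in the primal bounds. The paper does \emph{not} use the column normalization or the contractivity $|\Theta(x;\cdot)|\le|x|$; instead, its main argument is precisely your ``fallback'': it chains the descent inequalities $\mathcal{L}(\bm J^{k+1},\bm C_1^{k+1},\bm C_2^{k+1},\cdot)\le\mathcal{L}(\bm J^{k},\bm C_1^{k},\bm C_2^{k},\cdot)$, shows the dual/penalty correction is summable via $\sum_k(\mu_i^k+\mu_i^{k-1})/\bigl(2(\mu_i^{k-1})^2\bigr)=\sum_k(1+\rho)/(2\mu_i^{k-1})<\infty$, concludes the augmented Lagrangian is upper-bounded, rewrites it as (data fit) $+$ (GMC penalties) minus bounded $\bm\Lambda$-terms to extract a bound on $\{\bm J^k\}$, and finally recovers $\{\bm C_i^k\}$ from the multiplier recursion~(\ref{Lambda_rule}). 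Your direct route through normalization and firm-threshold contractivity is shorter and sidesteps the Lagrangian bookkeeping, but it is not what the paper does.

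On the diagonal obstacle: you are right that the optimality identity only controls the off-diagonal of $\bm\Lambda_2^{k+1}$, and your proposed rescue is circular as written --- the summability of $\|\bm\Lambda_2^k-\bm\Lambda_2^{k-1}\|_F^2/\mu_2^{k-1}$ already presupposes a uniform bound on $\{\bm\Lambda_2^k\}$, and completing the square in $\mathcal{L}$ exposes $\|\bm\Lambda_2^k\|_F^2/(2\mu_2^k)$ as a \emph{subtracted} term, so an upper bound on $\mathcal{L}$ does not by itself cap $\|\bm\Lambda_2^k\|_F$. The paper does not address this point at all: it writes the optimality condition~(\ref{opt_cond_C2}) as if it held for every entry and proceeds. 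So your treatment is at least as careful as the paper's here; you have simply flagged a subtlety the paper suppresses.
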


We now state the main theorem related to convergence property of GMC-LRSSC algorithm.

\begin{theorem} \label{thm1}
	Let $Y^k= \big\{\big(\bm J^k, \bm C_1^k, \bm C_2^k, \bm \Lambda_1^k, \bm \Lambda_2^k\big)\big\}_{k=1}^\infty$ be a sequence generated by Algorithm 1. Suppose that $\lim_{k\rightarrow\infty}\big(Y^{k+1}-Y^{k}\big)=\bm 0$. Then, any accumulation point of the sequence $\{Y^k\big\}_{k=1}^\infty$ satisfies the Karush-Kuhn-Tucker (KKT) conditions for problem (\ref{LRSSC_gmc_reform}). In particular, whenever $\{Y^k\big\}_{k=1}^\infty$ converges, it converges to a point that satisfies KKT conditions. 
\end{theorem}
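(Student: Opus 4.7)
The plan is to prove the theorem in three logical stages: establish that accumulation points exist and satisfy primal feasibility, derive per-iteration optimality conditions for each block update, and then pass these conditions to the limit along a convergent subsequence. First I would write out the KKT system for (\ref{LRSSC_gmc_reform}): primal feasibility $\bm J^*=\bm C_1^*$ and $\bm J^*=\bm C_2^*-diag(\bm C_2^*)$; stationarity with respect to $\bm J$, namely $\bm X^T(\bm X \bm J^* - \bm X) + \bm\Lambda_1^* + \bm\Lambda_2^* = \bm 0$; and the subdifferential inclusions $\bm\Lambda_1^* \in \lambda\,\partial[(\psi_{\bm B}\!\circ\!\sigma)(\bm C_1^*)]$ and $\bm\Lambda_2^* - diag(\bm\Lambda_2^*) \in \tau\,\partial\psi_{\bm B}(\bm C_2^*)$ (the diagonal projection absorbs the constraint $diag(\bm C_2)=\bm 0$). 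By Proposition \ref{boundness} the sequence $\{Y^k\}$ is bounded, so Bolzano--Weierstrass yields a convergent subsequence $Y^{k_j}\to Y^*$.

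Next, I would extract the first-order conditions from each block update. The $\bm J$-update (\ref{J_rule}) rearranges to $\bm X^T(\bm X\bm J^{k+1}-\bm X) + \mu_1^k(\bm J^{k+1}-\bm C_1^k) + \mu_2^k(\bm J^{k+1}-\bm C_2^k) + \bm\Lambda_1^k+\bm\Lambda_2^k = \bm 0$. Because Lemma \ref{lemma2} guarantees that the $\bm C_1$ and $\bm C_2$ subproblems are convex for $0<\gamma\leq 1$, the corresponding optimality conditions are exact subdifferential inclusions. Combining them with the multiplier updates (\ref{Lambda_rule}) gives the compact forms $\bm\Lambda_1^{k+1} \in \lambda\,\partial[(\psi_{\bm B}\!\circ\!\sigma)(\bm C_1^{k+1})]$ and (after accounting for the diagonal subtraction (\ref{diag_sub})) $\bm\Lambda_2^{k+1} - diag(\bm\Lambda_2^{k+1}) \in \tau\,\partial\psi_{\bm B}(\bm C_2^{k+1})$. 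Because (\ref{mu_update}) caps $\mu_i^k$ at $\mu^{\max}$, the sequence $\{\mu_i^k\}$ is eventually constant; combining this with the hypothesis $Y^{k+1}-Y^k\to\bm 0$ and the multiplier recursion $\bm\Lambda_i^{k+1}-\bm\Lambda_i^k=\mu_i^k(\bm J^{k+1}-\bm C_i^{k+1})$ immediately forces $\bm J^{k+1}-\bm C_1^{k+1}\to\bm 0$ and $\bm J^{k+1}-\bm C_2^{k+1}+diag(\bm C_2^{k+1})\to\bm 0$, which gives primal feasibility at $Y^*$.

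For stationarity, the $\bm J$-equation passes to the limit immediately because it is polynomial in the iterates, yielding the first KKT equation. For the two subdifferential inclusions I would invoke outer semi-continuity (closedness of the graph) of the Clarke/limiting subdifferential for lower semi-continuous functions. For the sparsity inclusion, $\psi_{\bm B}$ is a finite sum of scaled MC penalties (as recalled in the proof of Proposition \ref{propos1}), each of which is lsc, so its subdifferential has a closed graph; hence $\bm\Lambda_2^{k_j+1}\to\bm\Lambda_2^*$ and $\bm C_2^{k_j+1}\to\bm C_2^*$ combined with the per-iteration inclusion yield $\bm\Lambda_2^*-diag(\bm\Lambda_2^*) \in \tau\,\partial\psi_{\bm B}(\bm C_2^*)$. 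For the low-rank inclusion, Proposition \ref{propos1} expresses $\partial[(\psi_{\bm B}\!\circ\!\sigma)(\bm C_1)]$ through the SVD of $\bm C_1$; along the convergent subsequence, a further subsequence can be extracted so that the singular vector matrices $\bm U^{k_j}, \bm V^{k_j}$ also converge (by compactness of the orthogonal group), and then the limit of $\bm U^{k_j}diag(\partial\psi_{\bm B}[\sigma_i(\bm C_1^{k_j})])\bm V^{k_j\,T}$ lies in $\partial[(\psi_{\bm B}\!\circ\!\sigma)(\bm C_1^*)]$ by continuity of the SVD and closedness of $\partial\psi_{\bm B}$.

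The main obstacle is precisely this last step: passing the subdifferential inclusion through the singular value function when the SVD is not unique in the presence of coinciding singular values. I would resolve it by working along a further subsequence on which $\bm U^{k_j}$ and $\bm V^{k_j}$ converge (guaranteed by compactness), and invoking the fact that, since $\partial\psi_{\bm B}$ is outer semi-continuous with nonempty compact values, the product form in Proposition \ref{propos1} is closed under such limits. Combining primal feasibility, the limit of the $\bm J$-stationarity equation, and the two limiting subdifferential inclusions produces all KKT conditions of (\ref{LRSSC_gmc_reform}) at $Y^*$, proving the theorem.
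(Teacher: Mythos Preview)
Your proposal is correct and follows a sound variational-analysis route, but it differs from the paper's argument in a notable way. The paper does not pass subdifferential inclusions to the limit via outer semi-continuity. Instead, it \emph{rewrites} the two subdifferential KKT conditions in ``prox form'': defining the monotone scalar map $Q_{a,b}(x)=x+\tfrac{1}{a}\partial\phi_b(x)$, it observes that $Q_{a,b}^{-1}$ is exactly the firm threshold $\Theta(\,\cdot\,;1/a,1/(\gamma a))$, so the conditions $\bm\Lambda_1^*\in\lambda\,\partial(\psi_{\bm B}\!\circ\!\sigma)(\bm C_1^*)$ and $\bm\Lambda_2^*\in\tau\,\partial\psi_{\bm B}(\bm C_2^*)$ are equivalent to $\bm C_1^*=\bm U_2\Theta(\bm\Sigma_2;\lambda/\mu_1^*,\lambda/(\gamma\mu_1^*))\bm V_2^T$ and $\bm C_2^*=\Theta(\bm J^*+\bm\Lambda_2^*/\mu_2^*;\tau/\mu_2^*,\tau/(\gamma\mu_2^*))$. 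These rewritten conditions coincide verbatim with the algorithm's update rules (\ref{C1_update_gmc}) and (\ref{C2_update_gmc}), so the hypothesis $\bm C_i^{k+1}-\bm C_i^{k}\to\bm 0$ yields them directly, with no need to argue about closed graphs of subdifferentials or to extract SVD-convergent subsequences.

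What each buys: the paper's inversion trick is concrete and completely sidesteps the delicate limit through the singular value function that you flag as the main obstacle; on the other hand, your closed-graph argument is more portable (it would work for any penalty whose limiting subdifferential is outer semi-continuous, not just one whose proximal map has an explicit inverse). Two small remarks on your write-up: the claim that the $\bm J$-equation ``passes to the limit because it is polynomial'' hides the step of showing $\mu_i^k(\bm J^{k+1}-\bm C_i^k)\to\bm 0$, which you should make explicit using your cap on $\mu_i^k$ together with $\bm J^{k+1}-\bm C_i^{k+1}\to\bm 0$ and $\bm C_i^{k+1}-\bm C_i^k\to\bm 0$; and note that the paper states it uses the \emph{uncapped} recursion $\mu_i^{k+1}=\rho\mu_i^k$ in its convergence analysis, so your choice to work with the capped version is a genuine (and arguably cleaner) departure.
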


The proofs of Proposition \ref{boundness} and Theorem \ref{thm1} are given in the Appendix.

\subsection{Stopping Criteria and Computational Complexity}

The steps in Algorithm 1 are repeated until convergence or until the maximum number of iterations is exceeded. We check the convergence by verifying the following inequalities at each iteration $k$: $\big\|\bm J^k - \bm C_1^k\|_{\infty}\leq\epsilon$, $\big\|\bm J^k - \bm C_2^k\|_{\infty}\leq\epsilon$, $\big\|\bm J^k - \bm J^{k-1}\|_{\infty}\leq\epsilon$. We found that setting error tolerance to $\epsilon=10^{-4}$ works well in practice. In each step we normalize columns of matrix $\mathbf{J}$. This normalization is frequently applied to stabilize convergence of non-negative matrix factorization algorithms \cite{Cichocki06}.

The computational complexity of Algorithm 1 is $O(nN^2+TN^3)$, where $T$ denotes the number of iterations. In the experiments, we set the maximal $T$ to $100$, but on all datasets the algorithm converged within less than 15 iterations. Note that the computational complexity of spectral clustering step is $O(N^3)$.

\section{$S_0/\ell_0$-LRSSC Algorithm}

\subsection{Problem Formulation}
In addition to the GMC penalty, we propose to directly use $S_0$ and $\ell_0$ as constrains for low-rank and sparsity. Specifically, by choosing $g(\bm C)=\|\bm C\|_{S_0}$ as a rank function, and $f(\bm C)=\|\bm C\|_0$ as a measure of sparsity in formulation (\ref{LRSSC_gen}), we obtain the following nonconvex optimization problem:
\begin{equation} \label{LRSSC_l0}
 \begin{split}
\min_{\bm C}\frac{1}{2}\big\|\bm X&-\bm{XC}\big\|^2_F+\lambda \big\|\bm C\big\|_{S_0}+\tau \big\|\bm C\big\|_0\\&\ \  s.t.\ \ diag(\bm C)=\bm{0}.
\end{split}
\end{equation}
The proximity operator $H: \mathbb{R}\rightarrow  \mathbb{R}$ of $\|x\|_0$ is defined entry-wise as:
\begin{equation} \label{l0_prox}
H(y; \lambda)=arg\min_{x\in \mathbb{R}}\Big\{\frac{1}{2}(y-x)^2+\lambda \|x\|_0\Big\},
\end{equation}

The closed form solution of (\ref{l0_prox}) at $y\in \mathbb{R}$ is the hard thresholding function defined in (\ref{hard_thr}).
The proximity operator of $\big\|\bm C\big\|_{S_0}$ is the hard thresholding function applied entry-wise to the vector of singular values \cite{Le13, Liang16}.

\subsection{Optimization Algorithm}

To solve minimization problem in (\ref{LRSSC_l0}), 
we split original problem into two variables $\bm J$ and $\bm C$. That leads to the following objective function:
\begin{equation} \label{LRSSC_l0_reform}
\begin{split}
\min_{\bm J, \bm C}& \frac{1}{2}\big\|\bm X-\bm{XJ}\big\|^2_F+\lambda \|\bm C\|_{S_0}+\tau \|\bm C\|_0\ \  \\& s.t.\ \ \ \bm J=\bm C-diag(\bm C),
\end{split}
\end{equation}
The augmented Lagrangian function of (\ref{LRSSC_l0_reform}) is:
\begin{equation} \label{Lag2}
\begin{split}
&\mathcal{L}_\mu\big(\bm J,\bm C, \bm\Lambda\big) = \frac{1}{2}\big\|\bm X-\bm{XJ}\big\|^2_F+\lambda \|\bm C\|_{S_0}+\tau \|\bm C\|_0\\&+\frac{\mu}{2}\big\|\bm J-\bm C+diag(\bm C)\big\|^2_F+\big\langle\bm\Lambda, \bm J-\bm C+diag(\bm C)\big\rangle,
\end{split}
\end{equation}
where $\mu$ is penalty parameter and $\bm\Lambda$ is Lagrange multiplier.

\textit{Update rule for $\bm J^{k+1}$}: Given $\bm C^k$, $\bm\Lambda^k$, $\mu^k$, minimization of the Lagrangian function in (\ref{Lag2}) yields the following update:
\begin{equation} \label{J_l0_update}
\bm J^{k+1}= \big[\bm X^T\bm X+\mu^k\bm{I}\big]^{-1}\big[\bm X^T\bm X+\mu^k\bm C^k-\bm\Lambda^k\big].
\end{equation}

\textit{Update rule for $\bm C^{k+1}$:} Given $\bm J^{k+1}$, $\bm\Lambda^k$, $\mu^k$, the following problem needs to be solved:
\begin{equation} \label{C_min_l0}
\min_{\bm C}\lambda \big\|\bm C\big\|_{S_0}+\tau \big\|\bm C\big\|_0 +\frac{\mu^k}{2}\Big\|\bm J^{k+1}+\frac{\bm\Lambda^k}{\mu^k}-\bm C\Big\|_F^2.
\end{equation}
When $\lambda\neq 0$ and $\tau=0$, the proximal map reduces to:
\begin{equation} \label{prox1}
P_g^{\mu}=\argmin_{\bm C} \lambda \big\|\bm C\big\|_{S_0} +\frac{\mu^k}{2}\Big\|\bm J^{k+1}+\frac{\bm\Lambda^k}{\mu^k}-\bm C\Big\|_F^2.
\end{equation}
Let $\bm{U\Sigma V}^T$ denote the SVD of matrix $\big(\bm J^{k+1}+{\bm\Lambda}^k/\mu^k\big)$. The closed-form solution of (\ref{prox1}) is given by:
\begin{equation} \label{C1_update_l0}
{\bm C}^{k+1}=\bm{U}H\Big(\bm \Sigma; \frac{\lambda}{\mu^k}\Big)\bm{V}^T,
\end{equation}
where $H$ is the hard thresholding function defined in (\ref{hard_thr}) and applied entry-wise to $\bm \Sigma$.
Similarly, when $\lambda=0$ and $\tau \neq 0$ the proximal map is given by:
\begin{equation} \label{prox2}
P_f^{\mu}=\argmin_{\bm C} \tau \big\|\bm C\big\|_{0} +\frac{\mu^k}{2}\Big\|\bm J^{k+1}+\frac{\bm\Lambda^k}{\mu^k}-\bm C\Big\|_F^2.
\end{equation}
Closed-form solution of (\ref{prox2}) is obtained by the hard thresholding operator $H$ applied entry-wise to matrix $\big(\bm J^{k+1}+{\bm\Lambda^k}/\mu^k\big)$:
\begin{equation} \label{C2_update_l0}
\begin{split}
\bm C^{k+1}=H\Big(\bm J^{k+1}+\frac{{\bm\Lambda}^k}{\mu^k}; \frac{\tau}{\mu^k}\Big),\\ {\bm C}^{k+1}\leftarrow{\bm C}^{k+1}-diag\big({\bm C}^{k+1}\big).
\end{split}
\end{equation}

Proximal average, introduced recently in \cite{YuYao13} and generalized to nonconvex and nonsmooth setting in \cite{YuXun15, LinWei16}, allows us to efficiently solve problem in (\ref{C_min_l0})
when $\lambda\neq 0$ and $\tau\neq 0$. In particular, given that the proximal maps $P_f^{\mu}$ and $P_g^{\mu}$ can be easily solved using the hard thresholding operator, we approximate the proximal map $P_{f+g}^{\mu}$ by averaging solutions of proximal maps of low-rank and sparse regularizers:
\begin{equation}\label{prox_avg}
P_{f+g}^{\mu}\approx \lambda P_g^{\mu}+\tau P_f^{\mu},
\end{equation}
where parameters $\tau$ and $\lambda$ are set such that $\tau+\lambda=1$. 

Furthermore, since $\ell_0$ and $S_0$ norms belong to the class of semi-algebraic functions \cite{Attouch13}, the proximal average function $P_{f+g}$ is also a semi-algebraic function \cite{YuXun15}.

\textit{Update rule for Lagrange multiplier $\bm\Lambda^{t+1}$:} Given $ \bm J^{k+1}$, $\bm C^{k+1}$, $\mu^k$, Lagrange multiplier is updated with the following equation:
\begin{equation}\label{Lambda_rule_l0}
{\bm\Lambda^{k+1}}={\bm\Lambda^k}+\mu^k\big(\bm J^{k+1}-\bm C^{k+1}\big)\\
\end{equation}
The main steps of the proposed algorithm are summarized in Algorithm \ref{alg2}.

\begin{algorithm}[h]
\caption{$S_0$/$\ell_0$-LRSSC by ADMM optimization}
\label{alg2}
\begin{algorithmic}[1]
\REQUIRE Data points as columns in $\bm X $ 
, $\{\tau, \lambda\}>0$, $\tau+\lambda=1$
\ENSURE Assignment of the data points to $k$ clusters
\STATE Initialize: $\big\{\bm J,\bm C,\bm\Lambda \big\}=\bm 0$,  $\mu^{(0)}>0$, $\rho>1$ 
\STATE Compute $\bm X^T\bm X$ for later use
\WHILE{not converged}
\STATE Update $\bm J^{k+1}$ by (\ref{J_l0_update})
\STATE Normalize columns of $\bm J$ to unit $\ell_2$ norm\
\STATE Calculate rank regularized proximal map $P_g^\mu$ by (\ref{C1_update_l0}) 
\STATE Calculate sparsity regularized proximal map $P_f^\mu$ by (\ref{C2_update_l0})
\STATE Update $\bm C^{k+1}=P_{f+g}^\mu$ defined in (\ref{prox_avg}), (\ref{prox1}), (\ref{prox2})
\STATE Update $\bm\Lambda^{k+1}$ by (\ref{Lambda_rule_l0})
\STATE Update $\mu^{k+1}=\min(\rho\mu^{k}, \mu^{max})$
\ENDWHILE
\STATE Calculate affinity matrix $\bm W=|\bm C_{1}|+|\bm C_{1}|^T$\\
\STATE Apply spectral clustering \cite{Ng01} to $\bm W$\\
\end{algorithmic}
\end{algorithm}

\subsection{Convergence Analysis}

\begin{theorem} \label{thm2} 
	Let $Y^k= \big\{\big(\bm J^k, \bm C^k, \bm \Lambda^k\big)\big\}_{k=1}^\infty$ be a sequence generated by Algorithm 2. Then, for any sufficiently large $\mu$, Algorithm 2 converges globally\footnote{That is, regardless of the initialization, it generates a bounded sequence that has at least one limit point which is a stationary point of (\ref{Lag2}).}.
\end{theorem}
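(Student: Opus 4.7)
The plan is to recognize Algorithm~\ref{alg2} as an exact nonconvex ADMM applied to a surrogate objective in which the composite penalty $\lambda\|\bm C\|_{S_0}+\tau\|\bm C\|_0$ is replaced by its proximal-average envelope $h_{\mu}$ in the sense of \cite{YuXun15}, and then to invoke the Kurdyka-\L{}ojasiewicz (KL) convergence framework of \cite{Attouch13, WangYin15}. Since $\|\cdot\|_0$ and $\|\cdot\|_{S_0}$ are semialgebraic and the proximal-average construction preserves semialgebraicity \cite{YuXun15}, $h_{\mu}$ is semialgebraic; because $\tfrac{1}{2}\|\bm X-\bm{XJ}\|_F^2$ and the linear coupling are polynomial, the surrogate augmented Lagrangian is semialgebraic, and hence automatically satisfies the KL inequality at every point of its domain.

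With that reformulation in hand, I would verify in turn the three standard descent-sequence hypotheses. First, sufficient decrease: the $\bm J$-block has Lipschitz-continuous gradient with constant $L=\|\bm X^T\bm X\|_2$, and both subproblems are solved exactly, so combining the two proximal inequalities with the identity $\bm\Lambda^{k+1}=\bm X^T(\bm X\bm J^{k+1}-\bm X)$ arising from the first-order optimality of the $\bm J$-step---which yields $\|\bm\Lambda^{k+1}-\bm\Lambda^k\|_F\leq L\|\bm J^{k+1}-\bm J^k\|_F$---one obtains $\mathcal{L}_{\mu}(Y^{k+1})\leq \mathcal{L}_{\mu}(Y^k)-c_{\mu}\|Y^{k+1}-Y^k\|_F^2$ for some $c_{\mu}>0$ as soon as $\mu$ exceeds an explicit threshold of order $L$. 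Second, a relative-error (subgradient) bound of the same order: assembling the two subproblem optimality conditions with the multiplier estimate produces $\bm w^{k+1}\in\partial\mathcal{L}_{\mu}(Y^{k+1})$ with $\|\bm w^{k+1}\|_F\leq C\|Y^{k+1}-Y^k\|_F$. Third, boundedness of $\{Y^k\}$: sufficient decrease keeps $\mathcal{L}_{\mu}(Y^k)$ nonincreasing, while the $\bm\Lambda$-block can be eliminated through the same optimality identity, leaving a function coercive in $(\bm J,\bm C)$ (using $h_{\mu}\geq 0$), so the iterates remain in a compact set.

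Once these three properties are in place, the abstract descent-sequence theorem of \cite{Attouch13} (or equivalently the nonconvex ADMM result of \cite{WangYin15}) delivers $\sum_{k}\|Y^{k+1}-Y^k\|_F<\infty$ together with convergence of the whole sequence to a single stationary point of $\mathcal{L}_{\mu}$. The main obstacle, in my view, is making rigorous the identification of Algorithm~\ref{alg2} with an exact ADMM on a surrogate semialgebraic problem: the step $\bm C^{k+1}=\lambda P_g^{\mu}+\tau P_f^{\mu}$ is, by \cite{YuXun15}, the exact proximal map of an envelope $h_{\mu^k}$, but that envelope depends on $\mu^k$, which itself evolves through the rule $\mu^{k+1}=\min(\rho\mu^k,\mu^{\max})$. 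Observing that $\mu^k$ saturates at $\mu^{\max}$ after finitely many iterations, one may run the KL argument on the tail of the sequence with a single fixed $\mu$; this is exactly the regime in which the hypothesis ``for any sufficiently large $\mu$'' has content and in which the above three hypotheses close up.
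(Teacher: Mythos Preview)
Your approach is essentially the paper's: both reduce the question to the nonconvex ADMM framework of \cite{WangYin15} together with semialgebraicity/KL of the $\ell_0$ and $S_0$ penalties. The paper is more terse---it simply casts (\ref{LRSSC_l0_reform}) in the two-block form $\min f_1(\bm C)+f_2(\bm J)$ subject to $\bm C=\bm J$ with $f_1(\bm C)=\lambda\|\bm C\|_{S_0}+\tau\|\bm C\|_0$ and $f_2(\bm J)=\tfrac{1}{2}\|\bm X-\bm{XJ}\|_F^2$, verifies assumptions A1--A5 of \cite{WangYin15} ($f_1$ lower semicontinuous and bounded below, $f_2$ coercive and Lipschitz-differentiable, $\bm A=\bm B=\bm I$), and handles the proximal-average step by a one-line citation to \cite{YuXun15}. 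Your explicit introduction of the envelope $h_\mu$ whose exact proximal map is the averaged update, and your observation that $\mu^k$ saturates at $\mu^{\max}$ after finitely many steps so the KL argument can be run on the tail with a fixed surrogate, are genuine refinements that make rigorous what the paper leaves implicit.

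One caution on your hand-verification of the descent hypotheses: the identity $\bm\Lambda^{k+1}=\bm X^T(\bm X\bm J^{k+1}-\bm X)$ does \emph{not} follow from the $\bm J$-step optimality in Algorithm~\ref{alg2}, because $\bm J$ is updated \emph{before} $\bm C$. Optimality of the $\bm J$-subproblem gives $\bm\Lambda^k+\mu^k(\bm J^{k+1}-\bm C^k)=\bm X^T(\bm X-\bm X\bm J^{k+1})$, whereas $\bm\Lambda^{k+1}=\bm\Lambda^k+\mu^k(\bm J^{k+1}-\bm C^{k+1})$, so the two differ by $\mu^k(\bm C^k-\bm C^{k+1})$. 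The analysis in \cite{WangYin15} places the Lipschitz-differentiable block \emph{last} among the primal updates precisely so that this identity holds cleanly; with the ordering of Algorithm~\ref{alg2} your dual-control estimate acquires an extra $\mu\|\bm C^{k+1}-\bm C^k\|_F$ term that must be absorbed before the sufficient-decrease and relative-error bounds close. The paper sidesteps this by invoking \cite{WangYin15} at the black-box level, but if you carry out the three hypotheses by hand as you propose, you will need to account for it.
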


\begin{proof}
The results in \cite{YuXun15} guarantee convergence of the proximal average method. To guarantee global convergence of the Algorithm 2, we rewrite the problem (\ref{LRSSC_l0_reform}) using the following more general form:
\begin{equation} \label{LRSSC_l0_convg_form}
\begin{split}
&\min_{ \bm C, \bm J} f_1(\bm C)+f_2(\bm J)\\
& \text{subject to } \bm{AC}=\bm {BJ},
\end{split}
\end{equation}
where $\bm A=\bm I$, $\bm B=\bm I$, $f_1(\bm C)=\lambda \|\bm C\|_{S_0}+\tau \|\bm C\|_0$, $f_2(\bm J)=\frac{1}{2}\big\|\bm X-\bm{XJ}\big\|^2_F$.

We will now show that the assumptions A1-A5 in \cite{WangYin15} which guarantee convergence in nonconvex nonsmooth optimization problem are satisfied. 
$\|\cdot\|_0$ and $\|\cdot\|_{S_0}$ are nonnegative lower semi-continuous functions and lower bounded. Therefore, $f_1$ as a sum of these functions is also lower semi-continuous and lower bounded. Furthermore, $f_2$ is coercive and $\bm B = \bm I$, so assumptions A1 and A4 hold. $\bm A = \bm I$ and $\bm B = \bm I$ imply that assumptions A2 and A3 hold. Next, $f_2$ is Lipschitz differentiable function so assumption A5 is also satisfied. Therefore, A1-A5 are satisfied and Algorithm 2 converges for any sufficiently large $\mu$ \cite{WangYin15}. Of note, by splitting the original problem in (\ref{LRSSC_l0}) in three variables as done in GMC-LRSSC, we could not guarantee convergence since the assumption A2 would not be satisfied. 

Furthermore, $\ell_0$ and $S_0$ norms belong to the class of semi-algebraic functions and satisfy Kurdyka-\L{}ojasiewicz inequality \cite{Lojasiewicz1993, Attouch13}. Sum of semi-algebraic function is again a semi-algebraic function, so $\mathcal{L}_\mu\big(\bm J,\bm C, \bm\Lambda\big)$ in (\ref{Lag2}) is a semi-algebraic function and therefore, satisfies Kurdyka-\L{}ojasiewicz inequality. This allows us to establish stronger convergence property, that is, sequence $\big\{\big(\bm J^k, \bm C^k, \bm \Lambda^k\big)\big\}$ generated by Algorithm 2 converges regardless of the initialization to the unique limit point \cite{WangYin15}.
\end{proof}

\subsection{Stopping Criteria and Computational Complexity}

The steps in Algorithm 2 are repeated until convergence or when the maximum number of iterations is exceeded. The convergence is achieved when inequalities $\big\|\bm J^k - \bm C^k\|_{\infty}\leq\epsilon$ and $\big\|\bm J^k - \bm J^{k-1}\|_{\infty}\leq\epsilon$ are satisfied. In all experiments error tolerance $\epsilon$ is set to $10^{-4}$.

As in GMC-LRSSC, the computational complexity of Algorithm 2 is $O(nN^2+TN^3)$, where $T$ denotes the number of iterations. We set the maximal number of iterations $T$ to $100$, but the algorithm typically converged within $20$ iterations.

\section{Experimental Results}

In this section, we compare the clustering performance and efficiency of the proposed algorithms with the state-of-the-art subspace clustering algorithms on synthetic and four real-world datasets. The performance is evaluated in terms of clustering error (CE)
defined as:
\begin{equation}
CE(\hat{\bm{r}},\bm{r})=\min_{\pi\in\Pi_L}\Big(1-\frac{1}{N}\sum_{i=1}^{N}\mathbbm{1}_{\{\pi(\hat{\bm{r}}_i)=\bm{r}_i\}}\Big),
\end{equation}
where $\Pi_L$ is the permutation space of $[L]$. 

We compare the performance of our algorithms with the state-of-the-art subspace clustering algorithms, including Sparse Subspace Clustering (SSC) \cite{Elhamifar09}, Low-Rank Representation (LRR) \cite{Liu10b, Liu13}, closed form Low-Rank Subspace Clustering (LRSC) \cite{Favaro11}, Sparse Subspace Clustering via Orthogonal Matching Pursuit (SSC-OMP) \cite{Dyer13}, Thresholding based Subspace Clustering (TSC) \cite{Heckel15}, Nearest Subspace Neighbor (NSN) \cite{Park14}, Low-Rank Sparse Subspace Clustering (LRSSC) \cite{Wang13}, $\ell_0$-Sparse Subspace clustering ($\ell_0$-SSC) \cite{Yang16, Yang18s} and Schatten-$p$ norm minimization based LRR \cite{Zhang18} ($S_{2/3}$-LRR and $S_{1/2}$-LRR). 

\subsection{Experimental Setup}

In all experiments, we set the parameters of GMC-LRSSC, and $S_0$/$\ell_0$-LRSSC as follows: $\tau=1-\lambda$, $\rho=3$, $\mu^{max} = 10^6$, $\epsilon$ in stopping criteria to $10^{-4}$ and maximum number of iterations to $100$. Parameters $\lambda$ and initial value of $\mu$ are tuned more carefully. For GMC-LRSSC $\lambda$ is parameterized using $\alpha$ as $1/(1+\alpha)$, where $\alpha$ is tested in range $[10^{-3},10^{3}]$ with step $10$. Both $\lambda$ and $\tau$ are scaled by $\mu_2^0$. For $S_0/\ell_0$-LRSSC parameter $\lambda$ is optimized in range $[0.1,0.9]$ with step $0.1$. After the best $\lambda$ is found, $\mu_2^0$ in GMC-LRSSC and $\mu^0$ in $S_0/\ell_0$-LRSSC are tested in the set $\{1,3,5,10,20\}$. Initial value of parameter $\mu_1$ in GMC-LRSSC is set to $0.1$ in all experiments. For GMC-LRSSC we test nonconvexity parameter $\gamma \in \{0.1, 0.6, 1\}$. That resulted in $\gamma=1$ on the Extended Yale B dataset, $\gamma=0.6$ on the MNIST dataset and $\gamma=0.1$ on the USPS and ISOLET1 datasets. On synthetic data we test $\gamma$ from $0.1$ to $1$ with step $0.1$.

For other state-of-the-art algorithms, we use the source codes provided by the authors. If the best parameters are available, we set them as reported in the corresponding papers/source codes. Otherwise, we tuned the parameters and retained those with the best performance. 
Specifically, for SSC parameter $\alpha \in\{10, 20, 50, 80, 100, 200, 500, 800, 1000\}$, for LRR $\lambda \in\{0.05, 0.1, 0.3, 0.5, 1, 2, 3, 4, 5, 6, 7\}$, for LRSC $\tau \in \{ 0.1,5,10,20,50,80,100,200,500,800\}$ and $\alpha \in \{0.1\tau, 0.5\tau,0.9\tau,1.11\tau, 2\tau, 10\tau\}$, for $S_{2/3}$-LRR and $S_{1/2}$-LRR $\lambda \in \{0.01, 0.05,0.1,0.3,0.5,0.8,1,1.5, 2,3,5,10\}$, and $\lambda$ in $\ell_0$-SSC is tuned in range $[0.1,1]$ with step $0.1$. In NSN and SSC-OMP number of neighbors is chosen in the set $\{2, 3, 5, 8, 10, 12, 15, 18, 20\}$. For TSC we set $q=max(3,\lceil n/20\rceil).$ 
For LRSSC we test $\lambda$ parameter in range $[10^{-3},10^{4}]$ with step $10$ on real-world datasets. In order to have completely same setting on the synthetic data, we tune the LRSSC parameters in the same way as for GMC-LRSSC.

Parameters of all algorithms are tuned on $20$ runs and for $L=\{3,5,10\}$ with different random seed than in the final experiment. In the final experiment we run each algorithm $100$ times.

\subsection{Synthetic Data}

In the synthetic data experiment we compare LRSSC, GMC-LRSSC and $S_0/\ell_0$-LRSSC 
for different levels of noise and number of samples. We generate three $5$-dimensional disjoint subspaces embedded in the $100$-dimensional space. Subspace bases $\{U_i\}_{i=1}^3\in \mathbb{R}^{100\times 5}$ are constructed such that rank$([\bm U_1,\bm U_2,\bm U_3])=10$. We randomly sample $N_i$ data points from each subspace by computing $\{\bm X_i=\bm U_i\bm A_i\}_{i=1}^3$, where $\{\bm A_i\}_{i=1}^3 \in \mathbb{R}^{5\times N_i}$ is generated from $\mathcal N(0,1)$ distribution. We sample the same number of data points from each subspace, i.e. $N_1=N_2=N_3$. We then add Gaussian noise with zero mean and vary the noise variance. Fig. \ref{Fig2} shows the average clustering error over $10$ runs for different number of samples per subspace and different noise variance.


\begin{figure}[h]
	\centering
	\includegraphics[width=0.48\textwidth]{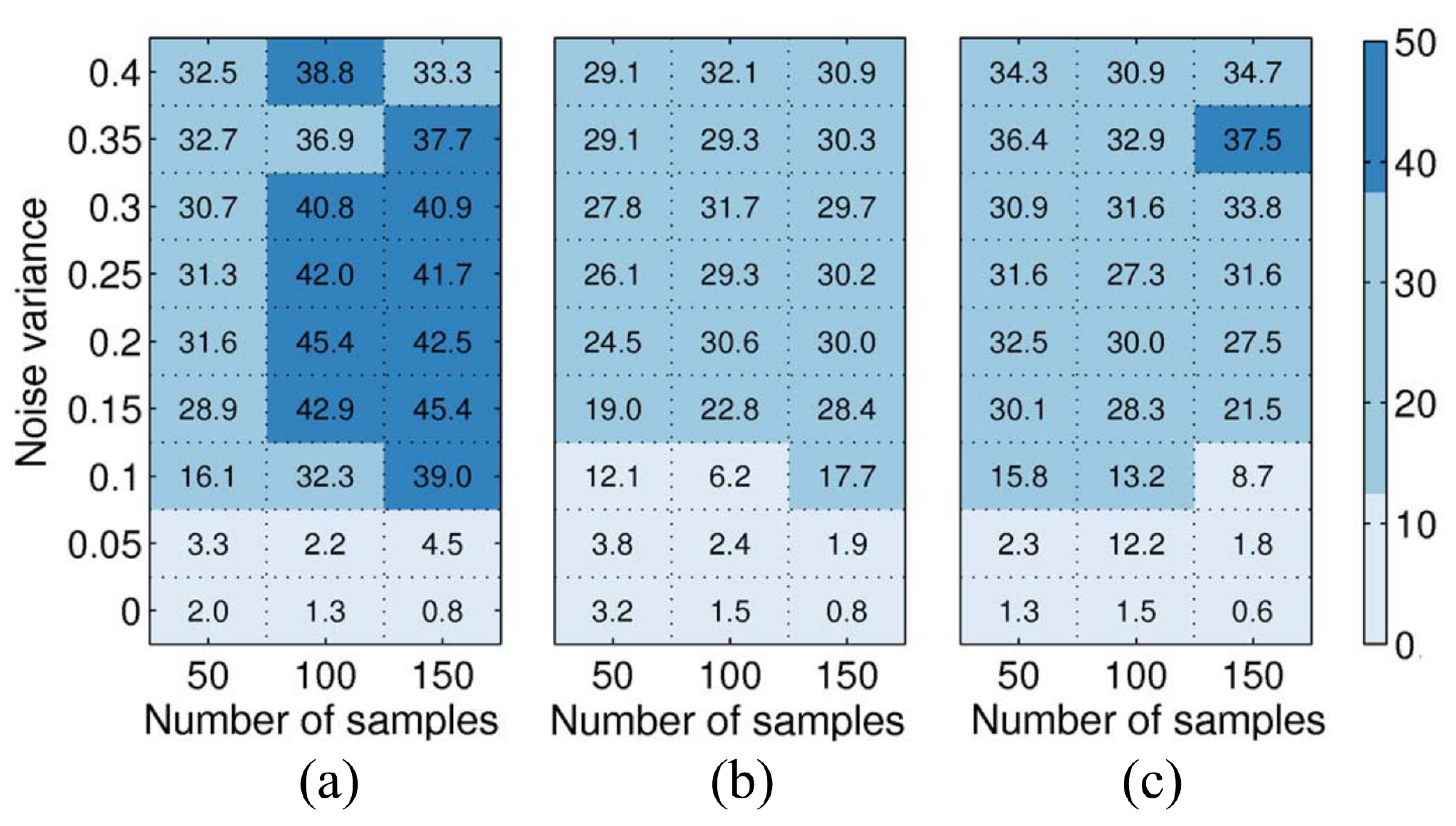}
	\caption{Clustering error ($\%$) on synthetic data when varying number of samples per subspace (x-axis) and noise variance (y-axis). (a) LRSSC. (b) GMC-LRSSC. (c) $S_0/\ell_0$-LRSSC.} 
	\label{Fig2}
\end{figure}
For $50$ data points per subspace and small measurement noise, $S_0/\ell_0$-LRSSC performs better than LRSSC and GMC-LRSSC. On the other hand, for larger measurement noise GMC-LRSSC is the best performing algorithm. When we increase the number of data poins to $100$, GMC-LRSSC remains the best performing algorithm for most levels of noise. However, when further increasing number of data points, $S_0/\ell_0$ performs better except for very large measurement noise. This is in line with results presented in \cite{Zheng17} which show that $\ell_0$ quasi-norm regularization of least-squares problems outperforms $\ell_p$ regularization $0<p\leq1$ for small measurement noise. Whereas LRSSC and GMC-LRSSC in most cases do not improve performance when increasing the number of data points, $S_0/\ell_0$ is often able to exploit additional data.

\subsection{Face Recognition Dataset}

The Extended Yale B dataset \cite{Georghiades01, Lee05} consists of face images of 38 individuals (subjects). It contains 64 frontal face images of each individual acquired under different illumination conditions. We use down-sampled $48\times 24$ pixel images and consider each vectorized image as one data point. The face images of each individual in Yale B dataset lie approximately in a 9-dimensional subspace \cite{Elhamifar13}.

\begin{table*}[!h]
\fontsize{8}{8}\selectfont
\renewcommand{\arraystretch}{1.3}
\caption{Clustering error (\%) on the Extended Yale B dataset}
\label{yaleb_acc}
\centering
\begin{tabular}{ | c || c | c | c | c | c | c | c| c | c | c |  c | c | } 
\hline
  L & SSC & LRR & LRSC & SSC-OMP & TSC & NSN & LRSSC & $\ell_0$-SSC  & $S_{2/3}$-LRR & $S_{1/2}$-LRR & GMC-LRSSC & $S_0$/$\ell_0$-LRSSC \\\hline\hline
	5 &  4.54 & 4.38 & 14.56 & 7.46 & 28.38 & 5.42 & 21.19 & 16.27 &  8.94 & 8.92 & \textbf{3.97} & \textbf{3.52} \\\hline
	10  & 8.78 & 7.80 & 34.74 & 15.26 & 39.73 & 9.19 & 26.89 & 28.45 & 9.45 & 9.49 & \textbf{4.00} & \textbf{4.45} \\\hline
	20   &  21.52 & 16.68 & 28.23 & 17.23 & 45.57 & 15.02 & 36.78 & 39.24 & 11.98 & 11.58 & \textbf{6.38} & \textbf{7.14} \\\hline
	30   &  26.73 & 21.27 & 31.53 & 20.53 & 47.10 & 18.69 & 33.60 & 39.54 & 12.02 & 11.61 &\textbf{8.65} & \textbf{8.35} \\\hline
\end{tabular}
\end{table*}

We perform experiments for different number of clusters, ranging from 5 to 30. In each experiment we sample uniformly $L$ clusters from the total number of subjects and compute average of clustering error over 100 random subsets. The results are reported in Table \ref{yaleb_acc} with the two best results highligted in boldface. Compared to the state-of-the-art methods, GMC-LRSSC and $S_0/\ell_0$-LRSSC achieve the lowest clustering error on all four clustering tasks. The difference between our algorithms and the second best performing other is significant for 10, 20 and 30 clusters (FDR<$1\%$; Benjamini-Hochberg corrected). Importantly, by increasing the number of clusters, the difference between our $\ell_0$-based formulations and convex low-rank and sparse formulations becomes larger. 

To check the effect of parameter $\gamma$ in GMC-LRSSC, we vary parameter $\gamma$ from $0.1$ to $1$, where small $\gamma$ means that GMC penalty $\psi_B$
is close to convex, and $\gamma=1$ corresponds to maximally nonconvex value of the penalty. Fig. \ref{Fig3} shows performance as a function of $\gamma$ values for $10$, $20$ and $30$ clusters. On all these tasks, larger values of $\gamma$ achieve lower clustering error than the smaller values.

\begin{figure}[h]
	\centering
	\includegraphics[width=0.34\textwidth]{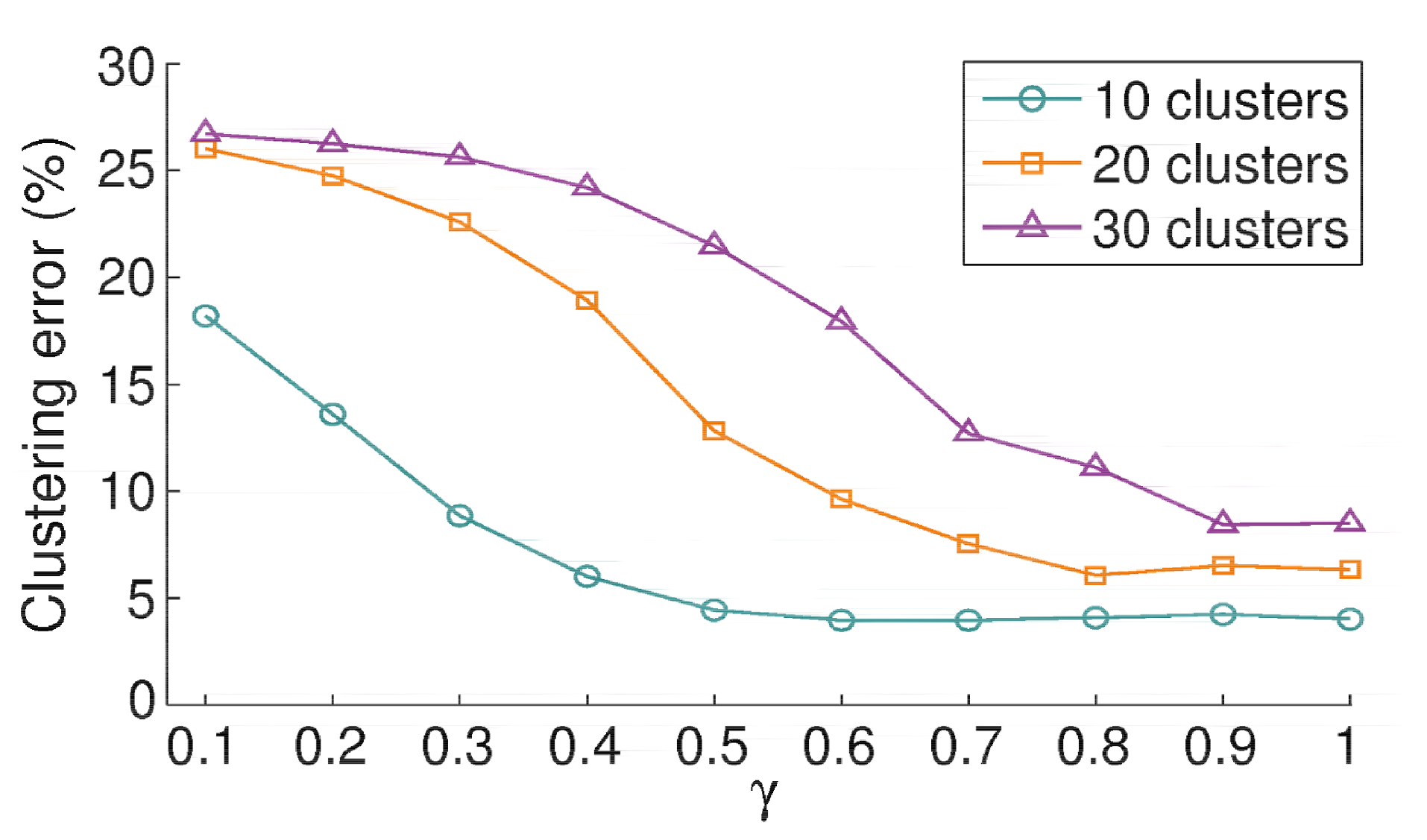}
	\caption{Clustering error on the Extended Yale B dataset for $10$, $20$ and $30$ clusters when varying $\gamma$ parameter in GMC-LRSSC.} 
	\label{Fig3}
\end{figure}

\subsection{Handwritten Digit Datasets}

For handwriting recognition task, we consider two datasets: MNIST and USPS 
datasets. Both datasets contain pictures of ten digits (0-9), each digit corresponding to one cluster. The MNIST dataset contains $10000$ centered $28\times 28$ pixel images of handwritten digits. The USPS dataset consists of 92898 handwritten digit images, each of $16\times 16$ dimension. The handwritten digits lie approximately in a 12-dimensional subspace \cite{Hastie98}. 

For both datasets, we use a subset of available images, sampling uniformly 50 images per digit in each run and compute average of clustering error over 100 runs. The performance comparisons for different choice of digits are shown in Table \ref{mnist_usps_acc}. On both datasets, $S_0/\ell_0$-LRSSC and GMC-LRSSC are the only algorithms that consistently achieve high performance across varying combinations of digits.

On the MNIST dataset $S_0/\ell_0$-LRSSC algorithm is among the best performing algorithms, and the difference is increasing for larger number of clusters. GMC-LRSSC has lower performance than $S_0/\ell_0$-LRSSC for three combinations, but it is still significantly better than SSC, LRR and LRSSC. SSC and LRSSC have the best performance for digit sets $\{2,4,8\}$ and $\{2,4,6,8,9\}$, but they fail to give satisfactory results on other combinations of digits. 

On the USPS dataset GMC-LRSSC is slightly better than $S_0/\ell_0$-LRSSC, except for the combinations of five digits. Specifically, on the digit set $\{2,4,6,8,9\}$ $S_0/\ell_0$-LRSSC outperforms all other methods. 
For the hardest problems with $10$ clusters GMC-LRSSC and $S_0/\ell_0$-LRSSC again have significantly better performance (FDR<$1\%$) than all other methods, that is $7.2\%$ and $4.8\%$ higher than the second best method, respectively. Fig. \ref{Fig4} illustrates derived affinity matrices on the USPS dataset for 10 clusters.

\begin{figure}[h]
	\centering
	\includegraphics[width=0.37\textwidth]{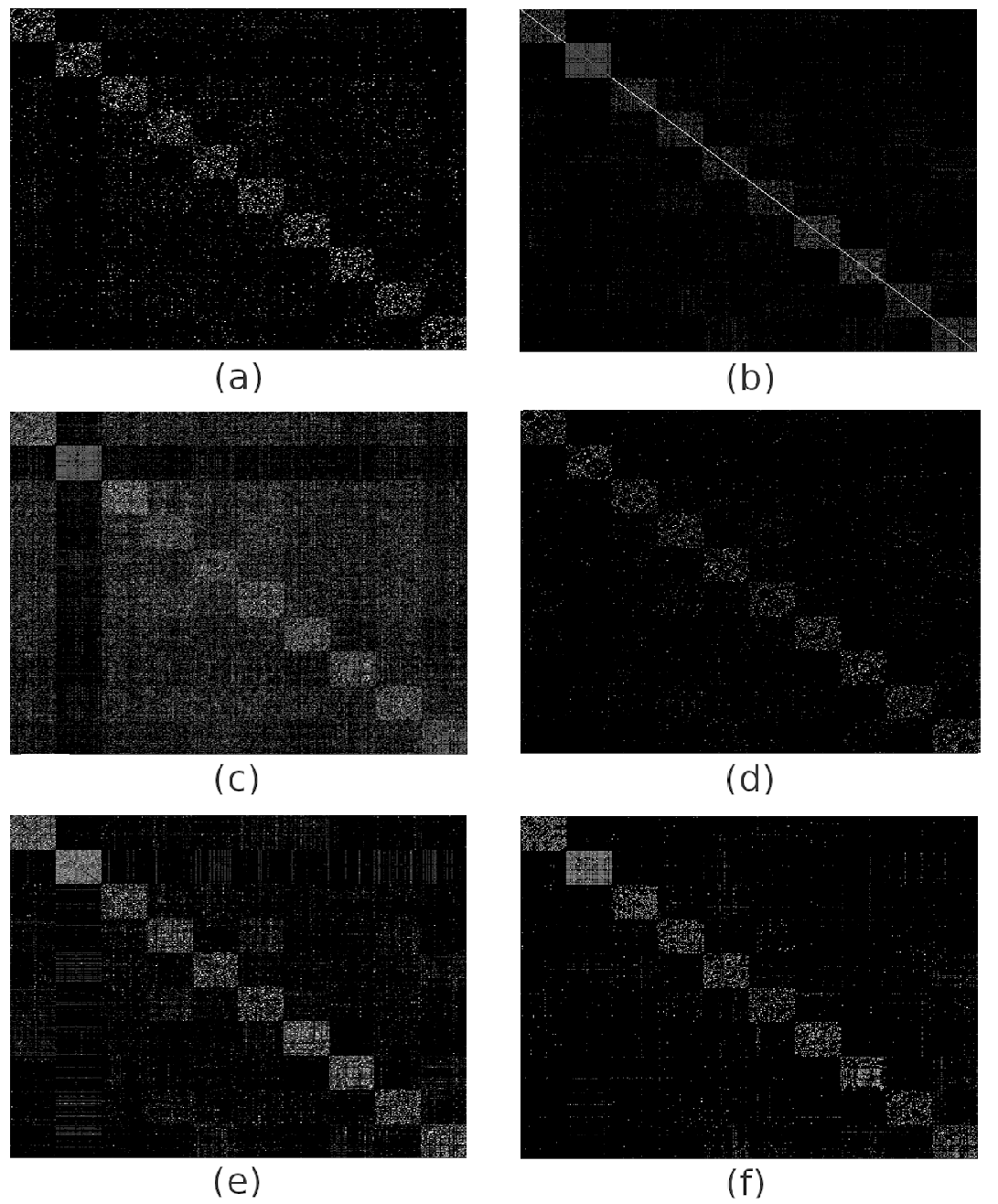}
	\caption{Visualization of affinity matrices on the USPS dataset using all $10$ digits. (a) SSC. (b) LRR. (c) LRSSC. (d) $\ell_0$-SSC. (e) GMC-LRSSC. (f) $S_0/\ell_0$-LRSSC.} 
	\label{Fig4}
\end{figure}

\begin{table*}[h]
\fontsize{8}{8}\selectfont
\renewcommand{\arraystretch}{1.3}
\caption{Clustering error (\%) on MNIST and USPS datasets}
\label{mnist_usps_acc}
\centering
\begin{tabular}{ | c | c || c | c | c | c | c | c| c | c | c | c | c | c| } 
\hline
\multirow{2}{*} {Digits} & \multirow{2}{*}{Dataset} & \multirow{2}{*}{SSC} &\multirow{2}{*}{LRR} & \multirow{2}{*}{LRSC} & SSC- & \multirow{2}{*}{TSC} & \multirow{2}{*}{NSN} & \multirow{2}{*}{LRSSC} & \multirow{2}{*}{$\ell_0$-SSC} & $S_{2/3}$- & $S_{1/2}$- & GMC- & $S_0$/$\ell_0$- \\ & & & & & OMP & & & & & LRR & LRR & LRSSC & LRSSC \\\hline\hline
	 \multirow{2}{*}{2,4,8} & MNIST &\textbf{7.43} & 14.14 & 10.59 & 11.06 & 12.09 & 13.02 & \textbf{7.01} & 7.82 & 14.80 & 15.03 & 8.66 & 8.92\\
	 & USPS & 6.02 & 10.37 & 7.61 &21.04 & 8.32 & 18.67 & 7.13 & \textbf{4.88} & 8.83 & 9.70 & \textbf{4.92} & 6.40 \\\hline
	\multirow{2}{*}{3,6,9} & MNIST & 3.89 & 3.49 & 4.61 & 5.69 & 3.25 & \textbf{2.99} & 4.15 & 3.49 & 5.93 & 6.61 & \textbf{2.93} & 3.25\\
	& USPS & 2.05 & 1.57 & 4.50 & 23.43 & 1.43 & 2.22 & 8.13 & \textbf{1.09} & 3.02 & 3.68 & \textbf{0.97} & 1.27\\\hline
	\multirow{2}{*}{1,4,7} & MNIST &  47.50 & 45.09 & 44.14 & 42.21 & 33.72 & \textbf{14.05} & 47.09 & 45.40 & 43.42 & 42.57 & 34.50 & \textbf{27.33} \\
	& USPS  & \textbf{2.21} & 4.01 & 4.33 & 58.07 & 7.45 & 9.45 & 8.61 & 3.68 & 4.27 & 5.37 & \textbf{2.65} & 3.82 \\\hline
	\multirow{2}{*}{2,4,6,8,9} & MNIST & \textbf{25.54} & 33.54 & 29.22 & 29.43 & 28.95 & 27.04 & \textbf{26.78} & 28.94 & 34.90 & 36.29 & 27.40 & 27.20 \\
	& USPS  & 15.69 & 22.30 & 18.86 & 53.43 & 20.35 & 26.69 & 18.67 & \textbf{14.31} & 19.22 & 19.95 & 15.37 & \textbf{13.38} \\\hline
	\multirow{2}{*}{0,1,3,5,7} & MNIST & 53.60 & 33.61 & 35.92 & 37.51 & 30.16 & \textbf{22.39} & 46.86 & 33.74 & 32.50 & 33.08 & 29.80 & \textbf{27.85} \\
	& USPS  & 30.00 & 22.83 & 28.47 & 74.66 & 25.58 & \textbf{13.36} & 35.44 & 30.17 & 27.41 & 27.68 & 24.76 & \textbf{10.89} \\\hline
	\multirow{2}{*}{0-9} & MNIST & 47.49 & 45.13 & 46.88 & 46.45 & 40.00 & \textbf{34.81} & 45.68 & 39.51 & 43.19 & 43.66 & 38.01 & \textbf{34.89} \\
	& USPS  & 28.28 & 33.44 & 28.58 & 84.01 & 29.34 & 28.43 & 33.10 & 27.27 & 28.75 & 29.24 & \textbf{20.46} & \textbf{22.45} \\\hline
\end{tabular}
\end{table*}

\subsection{Speech Recognition Dataset}

For the speech recognition task, we evaluate algorithms on the ISOLET dataset \cite{Fanty91}. The task is to cluster subjects, where each subject spoke the name of each letter of the alphabet twice. We use dataset ISOLET1 containing 26 subjects with 30 data points from each subject. The features include spectral coefficients, contour features, sonorant, presonorant and postsonorant features. To check whether the subspace clustering assumption holds on the ISOLET1 dataset, we compute the singular values of several subjects. Fig. \ref{Fig5} demonstrates that singular values decay rapidly and confirms that data points are drawn from low-dimensional subspaces. 

\begin{figure}[h]
		\centering
		\includegraphics[width=0.33\textwidth]{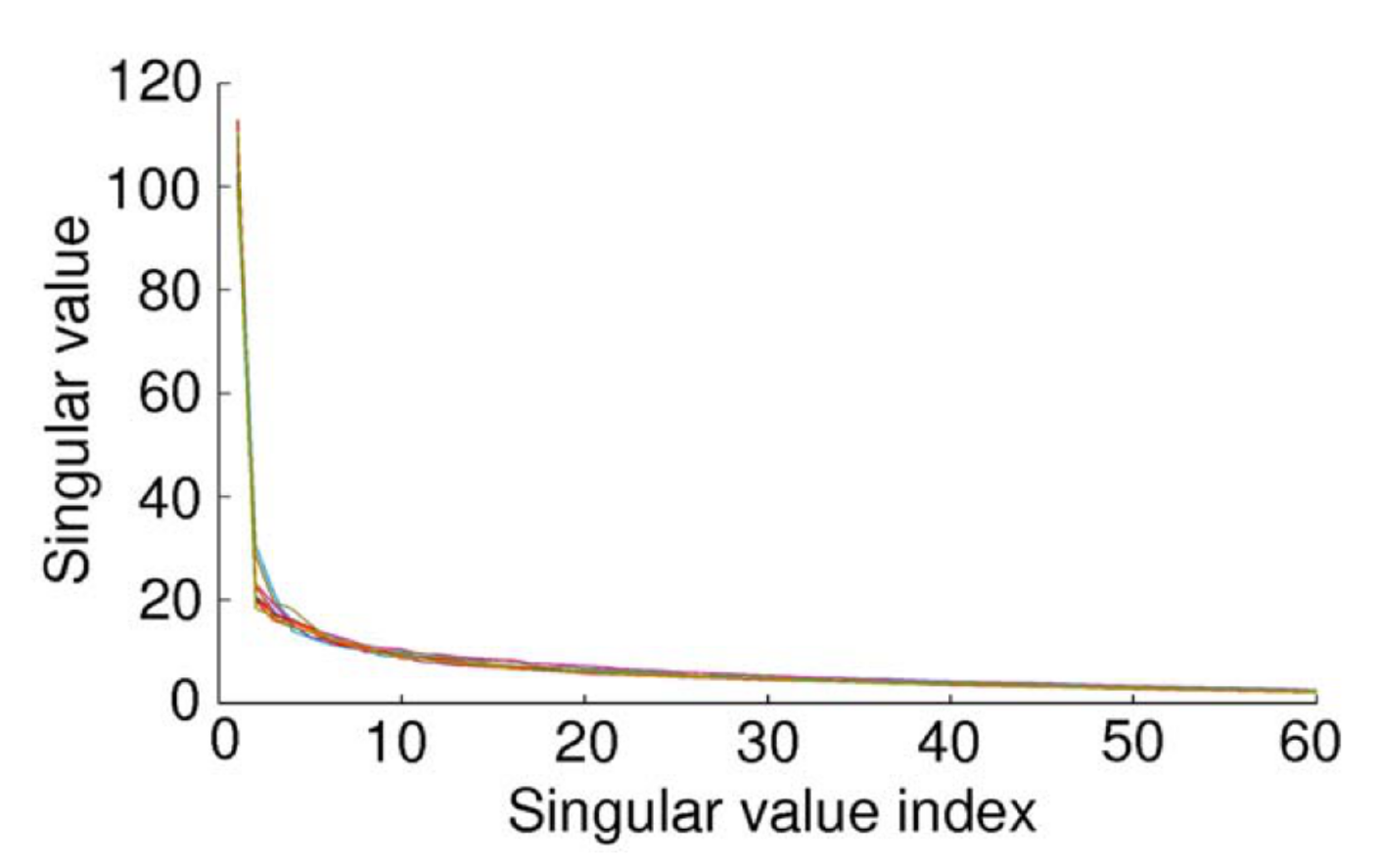}
		\caption{Singular values of several speakers on the ISOLET1 dataset.} 
		\label{Fig5}
\end{figure}

\begin{table*}[!t]
\fontsize{8}{8}\selectfont
\renewcommand{\arraystretch}{1.3}
\caption{Clustering error (\%) on the ISOLET1 dataset}
\label{isolet1_acc}
\centering
\begin{tabular}{ | c || c | c | c | c | c | c| c | c | c | c | c | c |} 
\hline
  L & SSC & LRR & LRSC & SSC-OMP & TSC & NSN & LRSSC & $\ell_0$-SSC & $S_{2/3}$-LRR & $S_{1/2}$-LRR & GMC-LRSSC & $S_0$/$\ell_0$-LRSSC \\\hline\hline
	5 & 10.98 & 7.61 & 10.25 & 27.79 & 11.58 & 8.23 & 8.63 & 9.23 & 7.45  & 8.06 & \textbf{7.07} &  \textbf{6.87}\\\hline
	10  & 17.11 & 14.65 & 15.84 & 44.44 & 19.35 & 16.04 & 14.72 & 18.21 & 14.10 & 14.34 & \textbf{13.92} & \textbf{13.81} \\\hline
	15  &  25.64 & 23.14 & 22.88 & 54.03 & 27.07 & 23.61 & 23.87 & 25.73 &  20.37 & \textbf{20.24} & 20.29 & \textbf{19.90}  \\\hline
	20    &  31.05 & 30.60 & 27.93 & 59.89 & 31.70 & 27.94 & 29.90 & 30.28  & 26.02 & \textbf{25.24} & 25.32 & \textbf{25.07} \\\hline
\end{tabular}
\end{table*}

We sample uniformly $L=\{5,10,15,20\}$ clusters from the total number of subjects over 100 random subsets. The average clustering errors are reported in Table \ref{isolet1_acc}. For all tested numbers of clusters $S_0/\ell_0$-LRSSC is the best performing method. GMC-LRSSC is the second best method for $5$ and $10$ clusters, while for $15$ and $20$ it achieves the same result as Schatten-$2/3$ and Schatten-$1/2$ LRR. 

\subsection{Computational Time and Convergence}

We further test the convergence behavior of GMC-LRSSC and $S_0/\ell_0$-LRSSC. The convergence conditions of GMC-LRSSC are satisfied within less than 15 iterations on all four real-world datasets. Fig. \ref{Fig6} illustrates convergence behavior of GMC-LRSSC on the MNIST and ISOLET1 datasets for 10 clusters. $S_0/\ell_0$-LRSSC converges within 20 iterations on the Extended Yale B, MNIST and USPS datasets. Fig. \ref{Fig7} shows behavior of $S_0/\ell_0$-LRSSC on the MNIST and ISOLET1 datasets for 10 clusters. Although the maximal number of iterations is exceeded on the ISOLET1 dataset, Fig. \ref{Fig7} shows that the error decays rapidly and within 20 iterations. 

The average computational time over 100 runs of each algorithm is shown in Table \ref{avg_time}. On all datasets, LRSC is consistently the fastest algorithm. GMC-LRSSC is among the fastest algorithms, whereas $S_0/\ell_0$-LRSSC is among the fastest algorithms on the Extended Yale B and MNIST datasets. As explained above, on ISOLET1 dataset $S_0/\ell_0$-LRSSC exceeds maximum number, resulting in higher computational time. 
All experiments were run in MATLAB 2017a environment on the PC with $256$ GB of RAM and Intel Xeon CPU E5-2650 v4 $2$ processors operating with a clock speed of $2.2$ GHz.

\begin{figure}[h]
  		\centering
  		\includegraphics[width=0.4\textwidth]{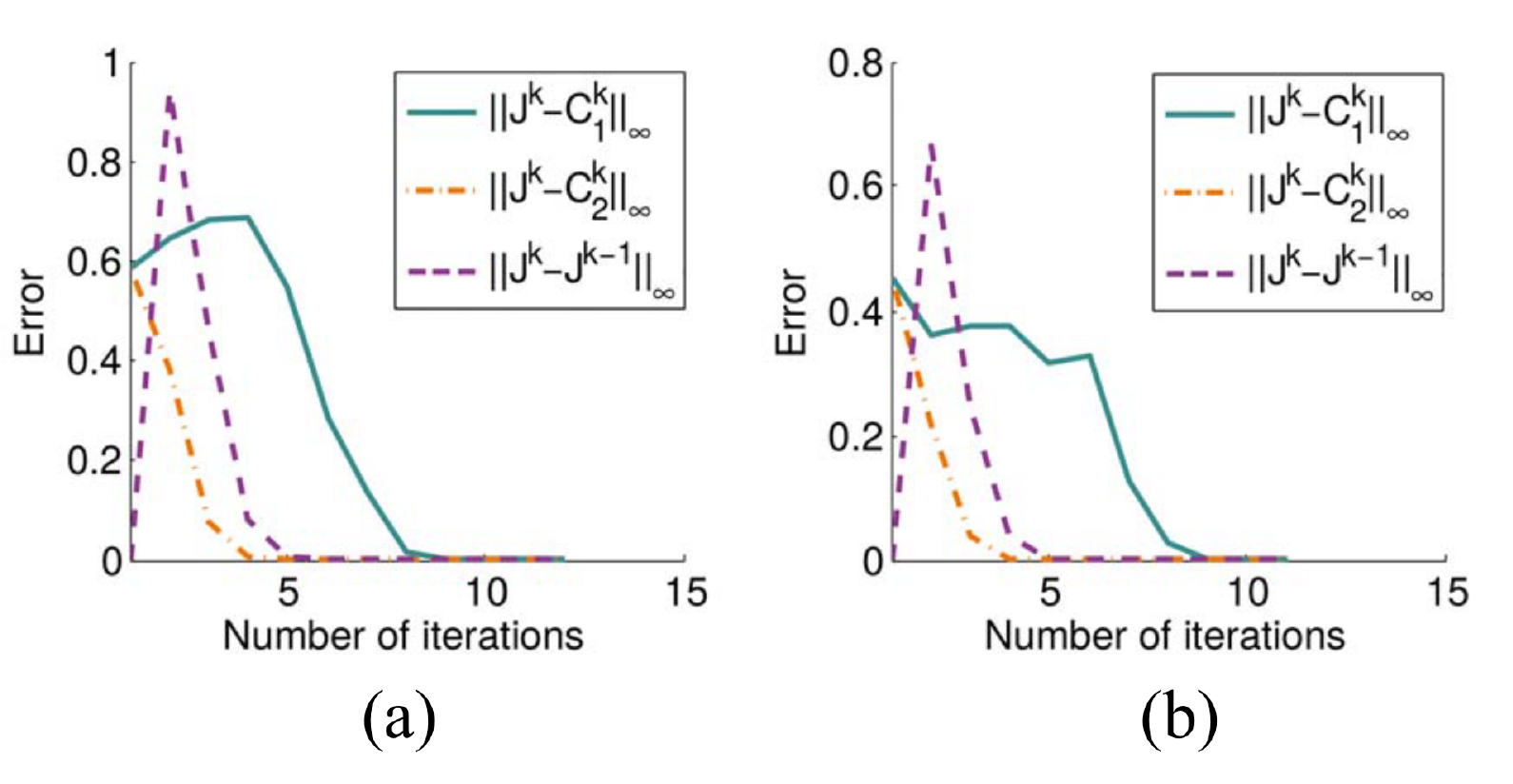}
  		\caption{Convergence of GMC-LRSSC. (a) MNIST dataset. (b) ISOLET1 dataset.}
  		\label{Fig6}
\end{figure}
 
\begin{figure}[h]
  		\centering
  		\includegraphics[width=0.4\textwidth]{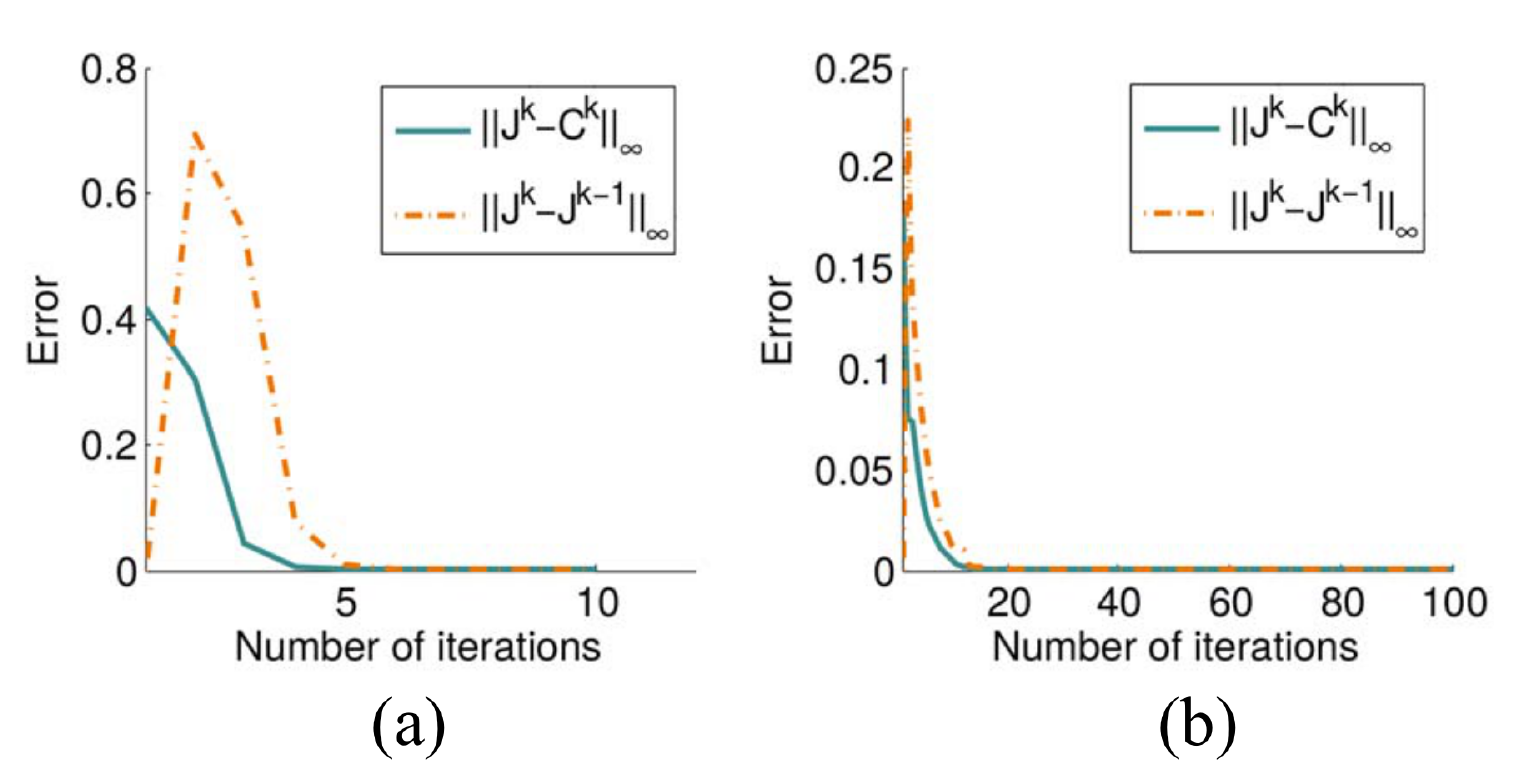}
  		\caption{Convergence of $S_0/\ell_0$-LRSSC. (a) MNIST dataset. (b) ISOLET1 dataset. }
  		\label{Fig7}
\end{figure}

\begin{table*}[!t]
\fontsize{8}{8}\selectfont
\renewcommand{\arraystretch}{1.3}
\caption{Average time (s) on the Extended Yale B, MNIST and ISOLET1 datasets}
\label{avg_time}
\centering
\begin{tabular}{ | c | c || c | c | c | c | c | c| c | c | c | c | c | c| } 
\hline
\multirow{2}{*} {Dataset} & \multirow{2}{*}{L} & \multirow{2}{*}{SSC} &\multirow{2}{*}{LRR} & \multirow{2}{*}{LRSC} & SSC- & \multirow{2}{*}{TSC} & \multirow{2}{*}{NSN} & \multirow{2}{*}{LRSSC} & \multirow{2}{*}{$\ell_0$-SSC} & $S_{2/3}$- & $S_{1/2}$- & GMC- & $S_0$/$\ell_0$- \\ & & & & & OMP & & & & & LRR & LRR & LRSSC & LRSSC \\\hline\hline
  	 \multirow{3}{*}{Yale B} & 5 & 30.88 & 10.21 & \textbf{0.18} & 0.56 & 0.44 & 1.28 & 10.65 & 40.11 & 9.85 & 10.02 & \textbf{0.39} & 0.59 \\
  	 & 10 & 54.30 & 31.52 & \textbf{0.41} & 1.29 & \textbf{1.00} & 2.94 & 69.97 & 101.00 & 33.78 & 34.32 & 1.86 & 2.91\\
  	& 30 & 180.67 & 271.52 & \textbf{2.47} & 6.29 & \textbf{6.20} & 14.84 & 642.33 & 623.94  & 295.74 & 297.36 & 19.19 & 29.24 \\\hline
  	\multirow{3}{*}{MNIST} & 3 & 0.56 & 1.17 & \textbf{0.05} & 0.15 & 0.17 & 0.52 & 0.37 & 2.63 & 1.35 & 1.37 & 0.12 & \textbf{0.11}\\
  	 & 5 & 1.31 & 4.01 & \textbf{0.10} & 0.27 & 0.29 & 0.85 & 1.40 & 5.11 & 5.03 & 5.10 & 0.30 & \textbf{0.26}\\
  	& 10 & 4.59 & 13.56 & \textbf{0.26} &\textbf{0.60} & 0.67 & 1.83 & 12.44 & 14.37 & 17.09 & 17.41 & 1.45 & 1.18
  	 \\\hline
  	\multirow{3}{*}{ISOLET1} & 5 & 6.83 & 4.78 & \textbf{0.09} & \textbf{0.20} & 0.25 & 0.60 & 4.24 & 4.71 &  5.99 & 6.12 & 0.41 & 2.36\\
  	 & 10 & 14.23 & 17.23 & \textbf{0.26} & \textbf{0.47} & 0.61 & 1.31 & 30.05 & 12.82 & 21.78 &  22.21 & 1.87 & 12.95\\
  	& 20 & 38.37 & 30.34 & \textbf{0.28} & \textbf{1.17} & 1.64 & 3.05 & 133.86 & 45.79 & 34.05 & 34.87 & 6.96  & 50.93 \\\hline	
\end{tabular}
\end{table*}

\section{Conclusion and Discussion}
In this paper we have introduced two nonconvex regularizations in low-rank sparse subspace clustering: (i) multivariate generalization of minimax-concave penalty function; and (ii) $S_0/\ell_0$ regularization with a proximal operator approximated using the proximal average method. Under the proposed framework, we have presented two algorithms based on the alternating direction method of multipliers. We have performed extensive experiments on synthetic and four real world datasets, including face recognition, handwriting recognition and speech recognition tasks. Our experimental results have shown that both proposed methods converge fast and achieve clustering error lower than nuclear and $\ell_1$ norms regularized objective. Moreover, for larger number of clusters proposed methods consistently outperform existing subspace clustering methods. That is explained by their more accurate approximation of rank and sparsity of the data representation matrix than it is the case with nuclear and $\ell_1$ norms.

The choice of norm should be decided depending on the dataset. In future work we plan to study how different initializations affect the accuracy of the proposed nonconvex regularization based methods. Instead of directly solving NP hard $\ell_0$ quasi-norm minimization problem, we are interested in finding a way to gradually build a solution. A possible strategy could be to start with $\ell_1$ norm solution. Since analytic formulas for thresholding function exist for $p=1/2$ and $p=2/3$ \cite{Suter16}, we can gradually shrink $p$ and use the current solution to initialize the next step of the algorithm. This approach is called $p$-continuation strategy in \cite{Zheng17}. 

\appendix[Proof of Proposition \ref{boundness} and Theorem \ref{thm1}]

In this section, we first prove the boundedness of variables in Algorithm 1. This result helps us to establish convergence property of Algorithm 1. We then prove Theorem \ref{thm1} in the paper, where we show that any converging point satisfies Karush-Kuhn-Tucker (KKT) conditions \cite{Kuhn51}.\\

\textit{Proof of Proposition \ref{boundness}}:\\
From the first order optimality conditions of Lagrangian function in (\ref{Lag}) we have:
\begin{equation}
\begin{split}
& \bm 0\in \partial_{\bm C_1}\mathcal{L}_{\mu_1^k,\mu_2^k}\big(\bm J^{k+1},\bm C_1^{k+1},\bm C_2^{k+1}, \bm\Lambda_1^k,\bm\Lambda_2^k\big)\\
& \bm 0\in \partial_{\bm C_2}\mathcal{L}_{\mu_1^k,\mu_2^k}\big(\bm J^{k+1},\bm C_1^{k+1},\bm C_2^{k+1}, \bm\Lambda_1^k,\bm\Lambda_2^k\big).
\end{split}
\end{equation}
The optimality condition of problem in (\ref{C2_min_gmc}) implies that:
\begin{equation} \label{opt_cond_C2}
\big[\partial \tau\psi_{\bm B}\big(\bm C_2^{k+1}\big)\big]_{ij}-\big[\bm\Lambda_2^{k+1}\big]_{ij}=0,
\end{equation}
where $\big[\partial \psi_{\bm B}\big(\bm C_2^{k+1}\big)\big]_{ij}$ denotes gradient of the GMC penalty $ \psi_{\bm B}$ at $\big[\bm C_2^{k+1}\big]_{ij}$.
By the definition of the scaled MC penalty in (\ref{scaled_MC_penalty}) and using $\bm B=\sqrt{\mu_2^k\gamma/\tau}\bm{I}$ we have:
\begin{equation} \label{phi_partial}
\partial \phi_ b\big(c_{ij}\big)=
\begin{cases}
sign(c_{ij})-\frac{\mu_2^k\gamma}{\tau} c_{ij}, & \text{if}\ |c_{ij}|\leq \frac{\tau}{\mu_2^k\gamma}\\
0, & \text{if}\ |c_{ij}|> \frac{\tau}{\mu_2^k\gamma},
\end{cases}
\end{equation}
where $c_{ij}$ denotes $\big[\bm C_2\big]_{ij}$.\\
If $|c_{ij}|> \tau/(\mu_2^k\gamma)$, then from (\ref{opt_cond_C2}) and (\ref{phi_partial}) directly follows $\big[\bm\Lambda_2^{k+1}\big]_{ij}=0$.\\
Otherwise, we get the following equality:
\begin{equation}
\big[\bm\Lambda_2^{k+1}\big]_{ij}= sign\big(\big[\bm C_2^{k+1}\big]_{ij}\big)-\frac{\mu_2^k\gamma}{\tau} \big[\bm C_2^{k+1}\big]_{ij}.
\end{equation}
Since $\big|\big[\bm C_2^{k+1}\big]_{ij}\big|\leq \tau/(\mu_2^k\gamma)$, it follows $\big|\big[\bm\Lambda_2^{k+1}\big]_{ij}\big|\leq 1$. Therefore, sequence $\big\{\bm \Lambda_2^{k}\big\}$ is bounded.

The optimality condition of problem in (\ref{C1_min_gmc}) implies that:
\begin{equation} \label{opt_cond_C1}
\big[\partial \lambda \psi_{\bm B}\big(\sigma\big(\bm C_1^{k+1}\big)\big)\big]_{ij} -\big[\bm\Lambda_1^{k+1}\big]_{ij}=0,
\end{equation}
Similarly, following the proof for $\bm C_2$ and using Proposition \ref{propos1}, it can be shown that the sequence $\big\{\bm \Lambda_1^{k}\big\}$ is also bounded. Using the definitions of $\bm J^{k+1}$, $\bm C_1^{k+1}$ and $\bm C_2^{k+1}$ as minimizers, we have the following inequalities \cite{Shang18}:
\begin{equation} \label{lag_def_ineq}
\begin{split}
\mathcal{L}_{\mu_1^k, \mu_2^k}\big(\bm J^{k+1},\bm C_1^{k+1},\bm C_2^{k+1},& \bm\Lambda_1^{k},\bm\Lambda_2^{k}\big)\\&\leq \mathcal{L}_{\mu_1^k, \mu_2^k}\big(\bm J^{k+1},\bm C_1^{k+1},\bm C_2^k, \bm\Lambda_1^{k},\bm\Lambda_2^{k}\big) \\
&\leq \mathcal{L}_{\mu_1^k, \mu_2^k}\big(\bm J^{k+1},\bm C_1^{k},\bm C_2^k, \bm\Lambda_1^{k},\bm\Lambda_2^{k}\big)\\
&\leq \mathcal{L}_{\mu_1^k, \mu_2^k}\big(\bm J^{k},\bm C_1^{k},\bm C_2^k, \bm\Lambda_1^{k},\bm\Lambda_2^{k}\big).
\end{split}
\end{equation}
Note that the last term equals:
\begin{equation} \label{lag_def_eq}
\begin{split}
&\mathcal{L}_{\mu_1^k, \mu_2^k}\big(\bm J^{k},\bm C_1^{k},\bm C_2^k, \bm\Lambda_1^{k},\bm\Lambda_2^{k}\big)\\&=\mathcal{L}_{\mu_1^{k-1},\mu_2^{k-1}}\big(\bm J^{k},\bm C_1^{k},\bm C_2^k, \bm\Lambda_1^{k-1},\bm\Lambda_2^{k-1}\big)+\sum_{i=1}^{2}a_i\|\bm \Lambda_i^k-\bm\Lambda_i^{k-1}\|_F^2,
\end{split}
\end{equation}
where $a_i=\frac{\mu_i^k+\mu_i^{k-1}}{2{(\mu_i^{k-1})}^2}$, $i=1,2$.
Since $\mu^k=\rho\mu^{k-1}$, $\rho > 1$, it follows that $\mu^k$ is non-decreasing and $\sum_{k=0}^{\infty}\frac{1}{\mu_i^k} <\infty$, $i=1,2$. We then have:
\begin{equation} \label{sum_mu}
\sum_{k=1}^{\infty}\frac{\mu_i^k+\mu_i^{k-1}}{2{(\mu_i^{k-1})}^2}=\sum_{k=1}^{\infty}\frac{1+\rho}{2\mu_i^{k-1}}<\infty.
\end{equation}
Since $\big\{\bm \Lambda_1^{k}\big\}$ and $\big\{\bm \Lambda_2^{k}\big\}$ are bounded, then $\big\{\|\bm \Lambda_1^k-\bm\Lambda_1^{k-1}\|_F^2\big\}$ and $\big\{\|\bm \Lambda_1^k-\bm\Lambda_1^{k-1}\|_F^2\big\}$ in (\ref{lag_def_eq}) are also bounded. Therefore, from (\ref{lag_def_ineq}), (\ref{lag_def_eq}) and (\ref{sum_mu}) follows that $\big\{\mathcal{L}_{\mu_1^k,\mu_2^k}\big(\bm J^{k+1},\bm C_1^{k+1},\bm C_2^{k+1}, \bm\Lambda_1^{k},\bm\Lambda_2^{k}\big)\big\}$ is upper-bounded.

Furthermore, it holds that:
\begin{equation}
\begin{split}
&\frac{1}{2}\big\|\bm X-\bm X \bm J^{k} \big\|^2_F+\lambda \psi_{\bm B}({\sigma}(\bm C_1^k))+\tau \psi_{\bm B}(\bm C_2^k)\\&=\mathcal{L}_{\mu_1^{k-1},\mu_2^{k-1}}\big(\bm J^{k},\bm C_1^{k},\bm C_2^k,\bm\Lambda_1^{k-1},\bm\Lambda_2^{k-1}\big)-\frac{1}{2}\frac{\big\|\bm \Lambda_1^k\big\|_F^2-\big\|\bm \Lambda_1^{k-1}\big\|_F^2}{\mu_1^{k-1}}\\&-\frac{1}{2}\frac{\big\|\bm \Lambda_2^k\big\|_F^2-\big\|\bm \Lambda_2^{k-1}\big\|_F^2}{\mu_2^{k-1}} 
\end{split}
\end{equation}
Since $\big\{\mathcal{L}_{\mu_1^{k-1},\mu_2^{k-1}}\big(\bm J^{k},\bm C_1^{k},\bm C_2^k, \bm\Lambda_1^{k-1},\bm\Lambda_2^{k-1}\big)\big\}$ is upper-bounded and $\big\{\bm \Lambda_1^{k}\big\}$ and $\big\{\bm \Lambda_2^{k}\big\}$ are both bounded, $\big\{\bm J^{k}\big\}$ is also bounded.

The boundedness of $\big\{\bm C_1^{k}\big\}$ and $\big\{\bm C_2^{k}\big\}$ follows from the update rules for Lagrange multipliers in $(\ref{Lambda_rule})$ and boundedness of $\big\{\bm J^{k}\big\}$, $\big\{\bm \Lambda_1^{k}\big\}$, and $\big\{\bm \Lambda_2^{k}\big\}$. 

Therefore, sequence $\big\{\big(\bm J^k, \bm C_1^k, \bm C_2^k, \bm \Lambda_1^k, \bm \Lambda_2^k\big)\big\}$ generated by Algorithm 1 is bounded. Bolzano-Weierstrass theorem \cite{Bartle11} then guarantees the existence of a convergent subsequence.\\

\textit{Proof of Theorem \ref{thm1}}:\\

Let $\big(\bm J^*, \bm C_1^*, \bm C_2^*, \bm \Lambda_1^*, \bm \Lambda_2^*\big)$ be a critical point of (\ref{LRSSC_gmc_reform}). The KKT conditions are derived as follows:
\begin{equation} \label{KKT_cond}
\begin{split}
&(1)\quad \bm J^*-\bm C_1^*=\bm{0},\\ 
&(2)\quad \bm J^*-\bm C_2^*=\bm{0},\\ &(3)-\bm{X}^T\big(\bm{X}-\bm{XJ^*}\big)+\bm{\Lambda}_1^*+\bm{\Lambda}_2^*=\bm{0}\\
&(4)\quad \bm{\Lambda}_1^* \in \partial_{_{\bm C_1}}\lambda \psi_{\bm B}\big(\sigma(\bm C_1^*)\big)\\ 
&(5) \quad
\bm{\Lambda}_2^* \in \partial_{_{\bm C_2}}\tau \psi_{\bm B}\big(\bm C_2^*\big).
\end{split}
\end{equation}
From the 1st and 4th KKT conditions, it follows:
\begin{equation} \label{4th_KKT_eq}
\begin{split}
\bm J^*+\frac{\bm \Lambda_1^*}{\mu_1^*}&\in \bm J^*+\frac{\lambda}{\mu_1^*}\partial_{_{\bm C_1}}\psi_{\bm B}\big(\sigma(\bm C_1^*)\big)\\
&= \bm C_1^*+\frac{\lambda}{\mu_1^*}
\partial_{_{\bm C_1}}\psi_{\bm B}\big(\sigma(\bm C_1^*)\big).
\end{split}
\end{equation}
Let $\bm{U}_1\bm \Sigma_1\bm{V}_1^T$ be the SVD of matrix $\bm C_1^*$. Using Proposition \ref{propos1}, the right hand side of (\ref{4th_KKT_eq}) equals:
\begin{equation}
\begin{split}
\bm C_1^*+\frac{\lambda}{\mu_1^*} \partial_{\bm C_1}\psi_{\bm B}\big(\sigma(\bm C_1^*)\big)&=\bm{U}_1\bm \Sigma_1\bm{V}_1^T+\frac{\lambda}{\mu_1^*}\bm{U}_1\partial_{\bm \Sigma_1}\big(\psi_{\bm B}\big(\bm \Sigma_1\big)\big)\bm{V}_1^T\\
&=\bm U_1\big(\bm{\Sigma_1}+\frac{\lambda}
{\mu_1^*} \partial_{\bm{\Sigma_1}}\big(\psi_{\bm B}\big(\bm \Sigma_1\big)\big)\bm{V}_1^T\\
&\triangleq \bm{U}_1Q_{a_1,b_1}\big(\bm \Sigma_1\big)\bm{V}_1^T,
\end{split}
\end{equation}
where $a_1=\mu_1^*/\lambda$ and $b_1=\sqrt{\mu_1^*\gamma/\lambda}$, $0< \gamma \leq 1$. The scalar function $Q_{a,b}$ is defined as $Q_{a,b}(x)\triangleq x+ \frac{1}{a}\partial\phi_{b}(x)$, where $\phi_{b}$ is the scaled MC penalty defined in (\ref{scaled_MC_penalty}). $Q_{a_1,b_1}(x)$ is applied element-wise to singular values of matrix $\bm C_1^*$.

Let $\bm U_2 \bm \Sigma_2\bm V_2^T$ be the SVD of matrix $\big(\bm J^*+\bm{\Lambda}_1^*/\mu_1^* \big)$. From (\ref{4th_KKT_eq}), we get the following relation:
\begin{equation} \label{C1_deriv_firm_step2}
\bm U_2\bm \Sigma_2\bm V_2^T\in\bm{U}_1Q_{a_1,b_1}\big(\bm \Sigma_1\big)\bm{V}_1^T.
\end{equation}
It is easy to verify that $Q_{a_1,b_1}$ is a monotone function \cite{KimLee16, ShenWen14} and
 $Q_{a_1,b_1}^{-1}(x)=\Theta\big(x; \frac{\lambda}{\mu_1^*}, \frac{\lambda}{\gamma\mu_1^*}\big)$ for $0<\gamma \leq 1$ , where $\Theta$ is the firm thresholding function defined in (\ref{firm_thr}). Applying $Q_{a_1,b_1}^{-1}$ to both sides of (\ref{C1_deriv_firm_step2}) and replacing $\bm{U}_1\bm \Sigma_1\bm{V}_1^T=\bm C_1^*$, we get: 
\begin{equation} \label{C1_KKT_transf}
\bm C_1^*=\bm U_2 Q_{a_1, b_1}^{-1}\big(\bm \Sigma_2\big)\bm V_2^T=\bm U_2\Theta\Big( \bm \Sigma_2; \frac{\lambda}{\mu_1^*}, \frac{\lambda}{\gamma\mu_1^*}\Big)\bm V_2^T.
\end{equation}
Similarly, from the 2nd and 5th KKT conditions, we have the following relations:
\begin{equation} \label{C2_deriv_firm}
\begin{split}
\bm J^*+\frac{\bm{\Lambda}_2^*}{\mu_2^*}&\in \bm J^*+\frac{\tau}{\mu_2^*}
\partial_{_{\bm C_2}}\psi_{\bm B}\big(\bm C_2^*\big)\\&=\bm C_2^*+\frac{\tau}{\mu_2^*}
\partial_{_{\bm C_2}}\psi_{\bm B}\big(\bm C_2^*\big)\triangleq Q_{a_2,b_2}\big(\bm C_2^*),
\end{split}
\end{equation}
where $a_2=\mu_2^*/\tau$ and $b_2=\sqrt{\mu_2^*\gamma/\tau}$, $0< \gamma \leq 1$.\\
Again, applying $Q_{a_2,b_2}^{-1}$ to both sides of (\ref{C2_deriv_firm}), we get the following equations:
\begin{equation} \label{C2_KKT_transf}
\bm C_2^*=Q_{a_2,b_2}^{-1}\Big(\bm J^*+\frac{\bm{\Lambda}_2^*}{\mu_2^*}\Big)= \Theta\Big(\bm J^*+\frac{\bm\Lambda_2^*}{\mu_2^*}; 
\frac{\tau}{\mu_2^*},\frac{\tau}{\gamma\mu_2^*}\Big).
\end{equation}
Therefore, the 4th and 5th KKT conditions can be rewritten as:
\begin{equation} \label{4_5th_KKT_rewritten}
\begin{split}
&(4)\quad \bm C_1^*=\bm U_2\Theta\Big( \bm \Sigma_2; \frac{\lambda}{\mu_1^*}, \frac{\lambda}{\gamma\mu_1^*}\Big)\bm V_2^T\\ 
&(5)\quad \bm C_2^*=\Theta\Big(\bm J^*+\frac{\bm\Lambda_2^*}{\mu_2^*}; \frac{\tau}{\mu_2^*}, \frac{\tau}{\gamma\mu_2^*}\Big).
\end{split}
\end{equation}

We now show that KKT conditions are satisfied when assumptions of Theorem \ref{thm1} hold. From (\ref{Lambda_rule}) we have:
\begin{equation}\label{KKT_lambda}
\begin{split}
{\bm\Lambda_1^{k+1}}-{\bm\Lambda_1^k}&=\mu_1^k\big(\bm J^{k+1}-\bm C_1^{k+1}\big)\\
{\bm\Lambda_2^{k+1}}-{\bm\Lambda_2^k}&=\mu_2^k\big(\bm J^{k+1}-\bm C_2^{k+1}\big).\\
\end{split}
\end{equation}
Since by the assumption $\big(\bm{\Lambda}_1^{k+1}-\bm{\Lambda}_1^{k}\big)\rightarrow \bm{0}$ and $\big(\bm{\Lambda}_2^{k+1}-\bm{\Lambda}_2^{k}\big)\rightarrow\bm{0}$, then the first two KKT conditions are satisfied.

From the update rule in (\ref{J_rule}), we have:
\begin{equation} \label{KKT_J}
\begin{split}
\big[\bm{X}^T\bm{X}&+(\mu_1^{k}+\mu_2^{k})\bm{I}\big]\big(\bm J^{k+1}-\bm J^{k}\big)=\mu_1^{k}\big({\bm C_1}^{k}-\bm J^{k}\big)\\&+\mu_2^{k}\big({\bm C_2}^{k}-\bm J^{k}\big)-{\bm{\Lambda}_1}^{k}-{\bm{\Lambda}_2}^{k}+ \bm X^T\big(\bm X- \bm X \bm J^{k}\big).
\end{split}
\end{equation}
From the the first two conditions, it follows that when $\bm J^{k+1}-\bm J^{k}\rightarrow \bm{0}$, the 3rd KKT condition is satisfied.

Next, using the update for $\bm C_1$ in (\ref{C1_update_gmc}) we obtain the following equation:
\begin{equation} \label{KKT_C1}
\bm C_1^{k+1}-\bm C_1^{k}=\bm U_2\Theta\Big(\bm \Sigma_2; \frac{\lambda}{\mu_1^k}, \frac{\lambda}{\gamma\mu_1^k}\Big){\bm V_2}^T-\bm C_1^{k},
\end{equation}
where $\bm U_2\bm \Sigma_2 {\bm V_2}^T$ is the SVD of matrix $\big(\bm J^{k+1}+{\bm{\Lambda}_1}^{k}/\mu_1^{k}\big)$.\\
From the update rule for $\bm C_2$ in (\ref{C2_update_gmc}) it follows:
\begin{equation} \label{KKT_C2}
\bm C_2^{k+1}-\bm C_2^{k}=\Theta\Big(\bm J^{k+1}+\frac{\bm\Lambda_2^k}{\mu_2^k}; \frac{\tau}{\mu_2^k}, \frac{\tau}{\gamma\mu_2^k}\Big)-\bm C_2^{k}.
\end{equation}
When $\bm C_1^{k+1}-\bm C_1^{k}\rightarrow \bm{0}$ and $\bm C_2^{k+1}-\bm C_2^{k}\rightarrow \bm{0}$, follow the equations in (\ref{4_5th_KKT_rewritten}).
Since $\big\{ Y^k\big\}_{k=1}^\infty$ is bounded and equations (\ref{KKT_lambda}), (\ref{KKT_J}), (\ref{KKT_C1}), (\ref{KKT_C2}) go to zero, we conclude that the sequence $\big\{Y^{k}\big\}_{k=1}^\infty$ asymptotically satisfies the KKT conditions in (\ref{KKT_cond}).

\section*{Acknowledgment}

The authors would like to thank Shin Matsushima for his helpful comments on improving the statements of Proposition 2 and Theorem 1. The authors would also like to thank the Editor and reviewers for their comments which helped to improve this paper.

\ifCLASSOPTIONcaptionsoff
  \newpage
\fi



\bibliographystyle{IEEEtran}
\bibliography{IEEEabrv,library}
\end{document}